\DeclareMathAlphabet{\mathpzc}{OT1}{pzc}{m}{it}
\def\BibTeX{{\rm B\kern-.05em{\sc i\kern-.025em b}\kern-.08em
    N\kern-.1667em\lower.7ex\hbox{E}\kern-.125emX}}
\newcommand{\cred}{\mathrm{CR}}
\newcommand{\de}[1][]{
\ifthenelse{\isempty{#1}}
{\Delta}{\Delta^{(#1)}}}
\newcommand{\accgrad}[1][]{
\ifthenelse{\isempty{#1}}
{\boldsymbol{\delta}}
{\boldsymbol{\delta}^{(#1)}}}
\newcommand{\hataccgrad}[1][]{
\ifthenelse{\isempty{#1}}
{\hat{\boldsymbol{\delta}}}
{\hat{\boldsymbol{\delta}}^{(#1)}}}
\newcommand{\lbound}{c_1}
\newcommand{\ubound}{c_2}
\newcommand*{\diff}{\mathop{}\!\mathrm{d}}
\newcommand{\diag}[1]{\mathrm{diag}\left(#1\right)} 
\newcommand{\dset}{\mathbb{D}}
\newcommand{\lw}[1][]{
\ifthenelse{\isempty{#1}}
{\boldsymbol{h}}
{\boldsymbol{h}^{(#1)}}}
\newcommand{\quanterr}[1][]{
\ifthenelse{\isempty{#1}}
{\boldsymbol{\zeta}}
{\boldsymbol{\zeta}^{(#1)}}}
\newcommand{\srQaunt}{Q_{\mathrm{sr}}}
\newcommand{\grad}[1][k]{\boldsymbol{g}^{(#1)}}
\newcommand{\stograd}[1][k]{\tilde{\boldsymbol{g}}_m^{(#1)}}
\newcommand{\x}{\mathbf{x}}
\newcommand{\y}{\mathbf{y}}
\newcommand{\vecIn}{\boldsymbol{a}}
\newcommand{\vecOut}{\hat{\boldsymbol{a}}}
\newcommand{\vecA}{\boldsymbol{a}}
\newcommand{\vecB}{\boldsymbol{b}}
\newcommand{\hw}[1][]{
\ifthenelse{\isempty{#1}}
{\mathbf{w}}
{\mathbf{w}^{(#1)}}}
\newcommand{\prob}{\mathbb{P}}
\newcommand{\qw}[1][]{\mathbf{w}}
\newcommand{\param}[1][]{
\ifthenelse{\isempty{#1}}
{\boldsymbol{\theta}}
{\boldsymbol{\theta}^{(#1)}}}
\newcommand{\eps}[1][]{
\ifthenelse{\isempty{#1}}
{\boldsymbol{\varepsilon}}
{\boldsymbol{\varepsilon}^{(#1)}}}
\newcommand{\epsidx}[1][]{
\ifthenelse{\isempty{#1}}
{\varepsilon^{(t)}}
{\varepsilon^{(t)}_{#1}}}
\newcommand{\err}[1][]{
\ifthenelse{\isempty{#1}}
{\boldsymbol{e}}
{\boldsymbol{e}^{(#1)}}}
\newcommand{\nw}[1][]{
\ifthenelse{\isempty{#1}}
{\widetilde{\mathbf{w}}}
{\widetilde{\mathbf{w}}^{(#1)}}}
\newcommand{\tw}[1][]{
\ifthenelse{\isempty{#1}}
{\widetilde{\mathbf{w}}}
{\widetilde{\mathbf{w}}^{(#1)}}}
\newcommand{\res}[1][]{  
\ifthenelse{\isempty{#1}}
{\mathbf{r}}
{\mathbf{r}^{(#1)}}}
\newcommand{\w}[1][]{
\ifthenelse{\isempty{#1}}
{\mathbf{w}}
{\mathbf{w}^{(#1)}}}
\newcommand{\argmin}{\mathop{\mathrm{argmin}}\limits} 
\newcommand{\ind}[1]{
    \mathbbm{1}\left( #1 \right)
}
\newcommand{\expect}[1][]{
\ifthenelse{\isempty{#1}}
{\mathbb{E}}
{\mathbb{E}\left[#1\right]}}
\newcommand{\expectXi}[1]{
{\mathbb{E}_{\xi}\big[#1 \big]}}
\newcommand{\fw}[1][]{
\ifthenelse{\isempty{#1}}
{\mathbf{\bar{w}}}
{\mathbf{\bar{w}}^{(#1)}}
}
\newcommand{\innerprod}[2]
{\left\langle #1,\,#2 \right\rangle}
\newcommand{\innprod}[2]
{\Big\langle #1,\,#2 \Big\rangle}
\newcommand{\inprod}[2]
{\big\langle #1,\,#2 \big\rangle}
\newcommand{\interval}{\mathbb{H}^{(r)}_{m,i}}
\newcommand{\normsq}[1]
{\big\| #1\big\|_2^2}
\newcommand{\normSQ}[1]
{\left\| #1\right\|_2^2}
\newcommand{\oriw}{\boldsymbol{\theta}}
\newcommand{\order}[1]{O\left( #1 \right)}
\newcommand{\posindex}{\mathcal{I}_{m}^{(k,t)}}
\newcommand{\negindex}{\mathcal{I}_{m}^{-(k,t)}}
\newcommand{\pdist}[1][m]{\mathcal{P}_{m}}
\newcommand{\p}[1][]{
\ifthenelse{\isempty{#1}}
{\boldsymbol{p}}
{\boldsymbol{p}^{(#1)}}
}
\newcommand{\srpr}{\boldsymbol{\pi}}
\newcommand{\srprob}{\pi}
\newcommand{\sign}{\operatorname{sign}}
\newcommand{\sumtau}{\sum_{t=0}^{\tau-1}}
\newcommand{\sumM}{\sum_{m=1}^{M}}
\newcommand{\sumAll}{\sumM\!\sumtau}
\newcommand{\cirone}
{\text{\ding{172}}}
\newcommand{\cirtwo}
{\text{\ding{173}}}
\newcommand{\cirthree}
{\text{\ding{174}}}
\newcommand{\cirfour}
{\text{\ding{175}}}
\newcommand\addpicture[3]{%
\setbox\mybox=\hbox{\includegraphics[scale=#3]{#2}}
\myboxwidth\wd\mybox    
\renewcommand\windowpagestuff{%
\includegraphics[scale=#3]{#2}
\captionof{figure}{A test figure.}}
\parpic[#1]{%
\begin{minipage}{\myboxwidth}
 \windowpagestuff 
\end{minipage} 
} }
\newenvironment{proof}[1][]{
\ifthenelse{\isempty{#1}}
{\par\vspace*{-1mm}\noindent\textit{Proof.} }
{\par\vspace*{-2mm}\noindent\textit{Proof of #1.} }}
{\hfill$\square$ \vspace*{2mm}}
\theoremstyle{mystyle} \newtheorem{Theorem}{Theorem}
\newtheorem{Lemma}{Lemma}
\newtheorem{Remark}{Remark}
\theoremstyle{mystyle} \newtheorem{Assumption}{Assumption}
\newcommand{\allAssmp}{\ref{assumption:lower_bound} to \ref{assumption:quant}}
\newcommand{\highlight}[1]{\vspace{1mm}\noindent{}\textbf{#1}\hspace{3mm}}
\newcommand{\papertheorem}[1]{Theorem~\ref{#1}}
\newcommand{\paperassumption}[1]{Assumption~\ref{#1}}
\newcommand{\paperlemma}[1]{Lemma~\ref{#1}}
\newcommand{\paperremark}[1]{Remark~\ref{#1}}
\newcommand{\paperfig}[1]{Fig.~\ref{#1}}
\newcommand{\papertab}[1]{TABLE~\ref{#1}}
\newcommand{\paperalg}[1]{Algorithm~\ref{#1}}
\newcommand{\paperappendix}[1]{Appendix~\ref{#1}}
\newcommand{\zoom}[2][0.8]{  
\scalebox{#1}{#2}
}
\definecolor{fedvote}{RGB}{235,255,251}
\definecolor{byzantinefedvote}{RGB}{235,246,255}
\begin{document}
\title{Federated Learning via Plurality Vote}
\author{Kai Yue, \textit{Graduate Student Member, IEEE}, Richeng Jin, \textit{Member, IEEE}, Chau-Wai Wong, \textit{Member, IEEE}, and Huaiyu Dai, \textit{Fellow, IEEE}}

\markboth{IEEE TRANSACTIONS ON NEURAL NETWORKS AND LEARNING SYSTEMS}%
{Communication-Efficient Federated Learning via Predictive Coding}

\maketitle
\makeatletter{\renewcommand*{\@makefnmark}{}
\footnotetext{
Manuscript received February 17, 2022; revised  September 28, 2022 and November 1, 2022; accepted November 22, 2022. 
This work was supported in part by the US National Science Foundation under grants CNS-1824518 and ECCS-2203214. (\emph{Corresponding author: Chau-Wai Wong.}) 

K. Yue, C.-W. Wong, and H. Dai are with the Department of Electrical and Computer Engineering, NC State University, Raleigh, NC 27695 USA (e-mail: \{kyue, chauwai.wong, hdai\}@ncsu.edu). 

R. Jin is with the College of Information Science and Electronic Engineering, Zhejiang University, the Zhejiang–Singapore Innovation and AI Joint Research Lab, and the Zhejiang Provincial Key Lab of Information Processing, Communication and Networking (IPCAN), Hangzhou, China, 310000  (e-mail: richengjin@zju.edu.cn). }
\makeatother}

\begin{abstract}
    Federated learning allows collaborative clients to solve a machine learning problem while preserving data privacy.
    Recent studies have tackled various challenges in federated learning,  
    but the joint optimization of communication overhead, learning reliability, and deployment efficiency is still an open problem. 
    To this end, we propose a new scheme named federated learning via plurality vote~(FedVote).
    In each communication round of FedVote, clients transmit binary or ternary weights to the server with low communication overhead. 
    The model parameters are aggregated via weighted voting to enhance the resilience against Byzantine attacks.
    When deployed for inference, the model with binary or ternary weights is resource-friendly to edge devices.
    Our results demonstrate that the proposed method can reduce quantization error and converges faster compared to the methods directly quantizing the model updates.
 \end{abstract}

\begin{IEEEkeywords}
Federated learning, distributed optimization, neural network quantization, efficient communication
\end{IEEEkeywords}

\section{Introduction}

Federated learning enables multiple clients to solve a machine learning problem under the coordination of a central server~\citep{kairouz2019advances}. 
Throughout the training stage, client data will be kept locally and only model weights or model updates will be shared with the server. 
Federated averaging (FedAvg)~\citep{mcmahan2017communication} was proposed as a generic federated learning solution. 
Although FedAvg takes advantage of distributed client data while maintaining their privacy, it leaves the following two challenges unsolved.
First, transmitting high-dimensional vectors between a client and the server for multiple rounds can incur significant communication overhead. 
In the federated learning literature, quantization has been directly applied to raw gradients to lower the communication overhead~\cite{reisizadeh2020fedpaq, haddadpour2020fedcom}. 
However, a more sophisticated design of a quantization scheme may provide better trade-offs between communication efficiency and model accuracy.
Second, the aggregation rule in FedAvg is vulnerable to Byzantine attacks~\citep{blanchard2017machine}. 
Prior works tackled this issue by using robust statistics such as coordinate-wise median and geometric median in the aggregation step~\citep{blanchard2017machine, yin2018byzantine}. 
Another strategy is to detect and reject updates from malicious attackers~\citep{munoz2019byzantine, sattler2020byzantine}. 
The robustness of the algorithm is enhanced at the cost of additional computational and algorithmic complexity. 

In this paper, we propose a new method called federated learning via plurality vote (FedVote). 
On the client side, we optimize a neural network with a range normalization function applied to model weights. 
After local updating, binary/ternary weight vectors are obtained via stochastic rounding and sent to the server.
The global model is updated by a voting procedure, and the voting results are sent back to each client for further optimization in the next round. 
The contributions of the paper are summarized as follows.

\begin{enumerate}
    \item We present FedVote as a novel federated learning solution. By exploiting the quantized model optimization on the client side and aggregation on the server side, we jointly optimize the communication overhead, learning reliability, and deployment efficiency.
    \item We theoretically and experimentally verify the effectiveness of our FedVote design.  
    In bandwidth-limited scenarios, FedVote is particularly advantageous in simultaneously achieving a high compression ratio and good test accuracy.
    Given a fixed communication cost, FedVote improves model accuracy on the non-i.i.d. CIFAR-10 dataset by $5$--$10\%$, $15$--$20\%$, and $25$--$30\%$ compared to FedPAQ~\citep{reisizadeh2020fedpaq}, signSGD~\citep{bernstein2018signsgd}, and FedAvg, respectively. 
    \item We extend FedVote to incorporate reputation-based voting in cross-silo federated learning.  
    The proposed method, Byzantine-FedVote, exhibits much better resilience to Byzantine attacks in the presence of close to half attackers without incurring excessive computation compared with existing algorithms. 
\end{enumerate}


\section{Related Work \label{section:related_works}} 
Federated learning serves as a privacy-preserving framework where clients can collaborate without sharing raw data. 
To improve communication efficiency, various strategies have been proposed.
For example, Sattler et al.~\cite{sattler2019robust} combined quantization and sparsification tools to lower the communication overhead.
Bernstein et al.~\cite{bernstein2018signsgd} showed that sign-based gradient descent schemes have good convergence rate in the homogeneous data distribution scenario.
This approach is further extended to the heterogeneous data distribution setting~\cite{chen2020distributed, jin2020stochastic, safaryan2021stochastic}.
FedAvg~\citep{mcmahan2017communication} adopts a periodic averaging scheme and targets at reducing the number of communication rounds. 
The convergence properties of FedAvg have been verified under different analytical frameworks~\cite{huang2021fl, li2020convergence}.
Hybrid methods consider simultaneous local updates and accumulative gradient compression~\citep{reisizadeh2020fedpaq, haddadpour2020fedcom}. 
In parallel with the aforementioned studies, cryptographic solutions have been adopted to provide stronger privacy protection, such as secure aggregation~\cite{bonawitz2016practical} and homomorphic encryption~\cite{zhang2020batchcrypt}. 
These algorithms may not be easily deployed in federated learning due to their special setups and additional communication and computational overheads.  

Meanwhile, there exist many open challenges to federated learning that is resilient to Byzantine adversaries, where arbitrary outputs can be produced.   
Blanchard et al.~\cite{blanchard2017machine} showed that FedAvg cannot tolerate a single Byzantine attacker. 
They proposed to select the gradient from normal clients based on the similarity of local updates. 
The idea of detecting malicious attackers based on similarity scores has been further developed in follow-up works.
Cao et al.~\cite{cao2020fltrust} calculated the cosine similarity scores between the malicious gradient and reference gradient based on a public small public dataset. 
Likewise, Shejwalkar et al.~\cite{shejwalkar2021manipulating} assumed a known number of attackers and filtered out malicious updates by computing the similarity score between the gradient and the extracted eigenvector.  

Inspired from the work of model quantization~\cite{hubara2016binarized, shayer2018learning, gong2019differentiable, qin2020binary}, we improve communication efficiency by using stochastic binary/ternary weights.
Our study is orthogonal to existing algorithms that focus on gradient compression. 
Furthermore, we propose a reputation-based voting strategy for  FedVote that is shown to have good convergence performance in the Byzantine setting. 
Compared to prior defense work, we do not assume a known number of attackers~\cite{blanchard2017machine,shejwalkar2021manipulating} and do not require the availability of a well-distributed public dataset~\cite{cao2020fltrust}.


A few recent work has also conducted case studies of quantized neural network in federated learning.
Lin et al.~\cite{lin2020ensemble} reported the model accuracy of 1-bit quantized neural networks aggregated via ensemble distillation.
The aggregation becomes complicated due to the separate optimization stage of knowledge distillation.
In addition, their BNN optimization is not tailored to the federated learning setting. 
Hou et al.~\cite{hou2019analysis} theoretically analyzed the convergence of distributed quantized neural networks,  
which was later implemented in the application of intrusion detection~\citep{qin2020line}.
Compared to~\cite{hou2019analysis}, we do not assume a convex and twice differentiable objective function and bounded gradients.
Therefore, their analysis cannot be directly applied to our study.  
We take a different perspective from the existing work and demonstrate that model quantization can be a more effective design to improve communication efficiency compared to gradient quantization.

\section{Preliminaries}\label{section:preliminary}

Symbol conventions are as follows. 
We use $[N]$ to denote the set of integers $\{1, 2, \dots, N\}$. 
Bold lower cases of letters such as $\boldsymbol{v}_m$ represent column vectors, and $v_{m,i}$ is the $i$th entry in the vector. 
For a scalar function, it applies elementwise operation when a vector input is given. 
$\mathbf{1} = [1,\dots,1]^\top$ denotes a vector with all entries equal to $1$.  

\subsection{Federated Learning}
The goal of federated learning is to build a machine learning model based on the training data distributed among multiple clients. 
To facilitate the learning procedure, a server will coordinate the training without accessing raw data~\citep{kairouz2019advances}.
In a supervised learning scenario, let $\mathcal{D}_{m}  = \{(\x_{m,j}, \y_{m,j})\}_{j=1}^{n_m} $ denote the training dataset on the $m$th client, with the input $\x_{m,j} \in \mathbb{R}^{d_1}$ and the label $\y_{m,j} \in \mathbb{R}^{d_2}$ in each training pair. 
The local objective function $f_m$ with a model weight vector $\param \in \mathbb{R}^{d}$ is given by 
\begin{equation}\label{eq:local_objective_function}
    f_{m}(\param) \triangleq f_{m}(\param; \mathcal{D}_m )  =  \frac{1}{n_m} \sum_{j=1}^{n_m} \ell(\param; (\x_{m,j}, \y_{m,j}) ), 
\end{equation}
where $\ell$ is a loss function quantifying the error of model $\param$  predicting the label $\y_{m,j}$ for an input $\x_{m,j}$. 
A global objective function may be formulated as 
\begin{equation}\label{eq:global_objective_function}
    \min_{\param \in \mathbb{R}^{d}} f(\param) = \frac{1}{M} \sum_{m=1}^{M} f_{m}(\param).
\end{equation}

\subsection{Quantized Neural Networks}\label{subsection:bnn}
Consider a neural network $g$ with the weight vector $\oriw \in \mathbb{R}^{d}$. 
A forward pass for an input $\x \in \mathbb{R}^{d_1}$ and a prediction $\hat{\y} \in \mathbb{R}^{d_2}$ can be written as $ \hat{\y} = g(\oriw, \x)$. 
In quantized neural networks, the real-valued $\oriw$ is replaced by $\qw \in \mathbb{D}_n^{d}$, where $\dset_{n} $ is a discrete set with a number $n$ of quantization levels.   
For example, we may have $\mathbb{D}_2 = \{-1,\,1\}$ for a binary neural network. 
For a given training set $\{(\x_j, \y_j)\}_{j=1}^{N}$ and the loss function $\ell$, the goal is to find an optimal $\qw^{*}$ such that the averaged loss is minimized over a search space of quantized weight vectors:
\begin{equation}\label{eq:quantized_nn_problem}
    \qw^{\star} =  \argmin_{\qw \in \dset^{d}_n} \frac{1}{N} \sum_{j=1}^{N} \ell(\qw; (\x_{j}, \y_{j})).
\end{equation}
Prior studies tried to solve \eqref{eq:quantized_nn_problem} by optimizing a real-valued latent weight vector $\lw \in \mathbb{R}^{d}$.
The interpretations of the latent weight vary when viewed from different perspectives. 
Hubara et al.~\cite{hubara2016binarized} used the sign operation to quantize the latent weight into two levels during the forward pass. 
The binary weights can be viewed as an approximation of their latent real-valued counterparts. 
Shayer et al.~\cite{shayer2018learning} trained a stochastic binary neural network, and the normalized latent parameters are interpreted as the Bernoulli distribution parameter $\vartheta_{i}$: 
\begin{equation}
    \vartheta_i \triangleq \widehat{\prob}(w_{i} = 1) = S(h_i), \quad w_i \in \{0, 1\},  
\end{equation}
where $S: \mathbb{R} \rightarrow (0,1)$ is the sigmoid function. 
In the forward pass, instead of using the binary vector $\qw$, its expected value, 
\begin{equation}
    \nw_{\text{sto-bnn}} \triangleq \mathbb{E}[\qw] 
    = -1 \! \times \! [\mathbf{1} \! - \! S(\lw)] + 1 \! \times \! S(\lw) = 2 \, S(\lw) -  \mathbf{1},
\end{equation}
will participate in the actual convolution or matrix multiplication operations. 
In other words, the neural network function becomes $\hat{\y} = g(\nw_{\text{sto-bnn}}, \x)$. 
Likewise, Gong et al.~\cite{gong2019differentiable} normalized the latent weight but interpreted it from a non-probabilistic viewpoint.
They approximated the staircase quantization function with a differentiable soft quantization  (DSQ) function, i.e.,  
\begin{equation}
    \nw_{\text{dsq}} \triangleq \tanh(a \lw), 
\end{equation}
where $\tanh:\mathbb{R} \rightarrow (-1,1)$ is the hyperbolic tangent function, and $a$ controls the shape of the function. 
The neural network function thus becomes $\hat{\y} = g(\nw_{\text{dsq}}, \x)$. 
The binary weights are obtained through the sign operation.

\section{Proposed FedVote Algorithm}\label{section:proposed_fedvote}

Inspired by prior works, we first present the latent weight-based BNN training methods. 
We depict an example of a single-layer network in \paperfig{fig:bnn_forwardpass} and give the details as follows.
First, a real-valued vector $\lw \in \mathbb{R}^{d}$ is introduced and its range is restricted by using a differentiable and invertible normalization function $\varphi: \mathbb{R} \rightarrow (-1, 1)$, 
such as the error function $\mathrm{erf}(\cdot)$ or hyperbolic function $\tanh(\cdot)$.
The forward pass is then calculated with the normalized weight vector $\nw$. The procedure is described as: 
\begin{equation}
    \hat{\y} =  g(\nw, \x), \quad \nw  \triangleq  \varphi(\lw) \label{eq:modified_forwardpass}.
\end{equation}
Second, in the back propagation, the latent weight vector $\lw$ is updated with its gradient, $\lw[t+1] = \lw[t] - \eta \nabla_{\lw} \ell$.   
Finally, the normalized weight vector $\nw$ is mapped to the discrete space to approximate $\qw^{*}$ via thresholding or stochastic rounding.

\begin{figure}
    \begin{overpic}[width=\linewidth]{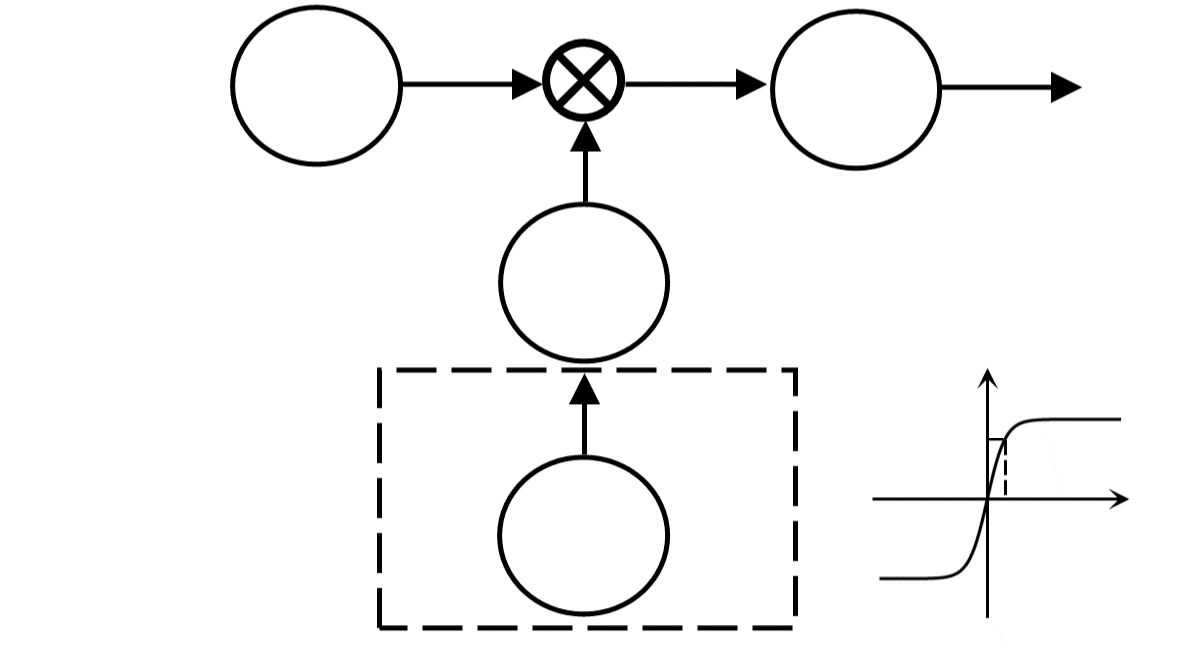}
        \put(5, 45){\zoom{input}}
        \put(25, 45){$\x$}
        
        \put(80, 40){$\sigma(\cdot)$}
        \put(68, 34.5){\zoom{nonlinear activation}}
    
        \put(67, 45){$\nw^{\top}\!\x $}
        \put(92, 45){$\hat{\y}$}
    
        \put(0, 18.){\zoom{normalization}}
        \put(34.5, 18.){$\varphi(\cdot)$}
    
        \put(0, 29){\zoom{normalized weight}}
        \put(47, 29){$\nw$}
    
        \put(0, 8){\zoom{latent weight}}
        \put(47, 8){$\lw$}
    
        \put(77, 22){$\varphi$}
        \put(78.5, 10.){$o$}
        \put(83.5, 9){$h_i$}
        \put(76, 16){$\widetilde{w}_i$}
    \end{overpic}
    \caption{
        Example of a single-layer quantized neural network with a latent weight vector $\lw \in \mathbb{R}^d$. 
        $\lw$ is normalized to generate $\nw \in (-1,1)^{d}$, and the output is $\hat{\y} = \sigma(\nw^\top \!\x) $.  
        A binary weight $\w$ can be obtained by thresholding or stochastically rounding $\nw$ to the discrete space $\mathbb{D}^d_2$. 
        \label{fig:bnn_forwardpass}  }    
\end{figure}

The separate representation of the quantized weight $\w$ and latent weight $\lw$ in the aforementioned model quantization scheme has introduced unique challenges in designing a federated training algorithm.  
For example, if clients choose to upload only binary weights $\w$ to save bandwidth, the latent weight $\lw$ on the server will be out of sync.
Besides, the advantages of designing model quantization methods over existing gradient quantization algorithms in federated learning is unexplored. 
To answer these questions, we present FedVote algorithm with an emphasis on uplink communication efficiency and enhanced Byzantine resilience.
We follow the widely adopted analysis framework in wireless communication to investigate only the client uplink overhead, assuming that the downlink bandwidth is much larger and the server will have enough transmission power~\citep{tran2019federated}.
To reduce the message size per round, we train a quantized neural network under the federated learning framework. 
The goal is to find a quantized weight vector $\hw^{*} $ that minimizes the global objective function $f$ formulated in (\ref{eq:global_objective_function}),
\begin{equation}\label{eq:fedvote_objective_function}
    \hw^{*} = \argmin_{\hw \in \dset^d_{n}} f(\hw).  
\end{equation}
For the simplicity of presentation,  we mainly focus on the BNN case with $\dset_{2} = \{-1,1\}$. 
We illustrate the procedure in \paperfig{fig:fedvote} and provide the pseudo code in Algorithm~\ref{alg:fedvote}.  
Below, we explain each step in more detail. 

\begin{algorithm}[tb]
    \caption{Binary-Weight FedVote with/without Byzantine Tolerance \label{alg:fedvote}}
    \SetKwComment{Comment}{$\triangleright$ }{}
    \SetKwBlock{server}{on server:}{}
    \SetKwBlock{worker}{on $m^{\mathrm{th}}$ worker:}{}
    \SetKwBlock{vote}{$\textrm{function } \text{vote}(\{\hw_{m}^{(k,\tau)}\}_{m=1}^{M})$}{}
    \SetKw{init}{initialize}
    \SetKw{rec}{receive}
    \SetKw{pull}{pull}
    \SetKw{from}{from}
    \SetKw{push}{push}
    \SetKw{send}{send}
    \SetKw{to}{to}
    \SetKw{broadcast}{broadcast}
    \DontPrintSemicolon
    
    \init $\p^{(0)}$ and \broadcast\\
    \For{$k=0,\, 1,\, \cdots, N-1$}{   
        \worker{
            \rec $\p[k]$ \from the server \\
            \init latent weight $\lw[k, 0]_m = \varphi^{-1}(2 \p[k] - 1)$   \\
            \For{$t = 0 : \tau-1$}{
                $\stograd[k, t] = \nabla_{\lw} f_{m}(\varphi(\lw[k,t]_{m}); \xi^{(t,r)}_{m} ) $\\
                $\lw[k,t+1]_{m} = \lw[k,t]_{m} - \eta^{(k,t)} \; \stograd[k,t] $
            }
            $\nw[k,\tau]_{m} = \varphi(\lw[k,\tau])$ \\
            $\hw[k,\tau]_{m} = \operatorname{sto\_rounding}(\nw[k,\tau]_{m})$ 
            \hspace*{8pt}
            $\triangleright $ \zoom[0.8]{\texttt{Eq. (\ref{eq:sto_rounding})}}\\
            \send $\hw[k,\tau]_{m}$ \to server 
        }
    
        \server{
            $\{\hw[k+1],\p[k+1]\}= \operatorname{vote}(\{\hw[k,\tau]_{m}\}_{m=1}^{M})$ \\
            \broadcast $\p[k+1]$ \to workers 
        }
        
    }
    \vspace{8pt}

    \vote{
        \For{$i=1:d$}{
            $w^{(k+1)}_{i} = \sign\left( \sum\limits_{m=1}^M w^{(k,\tau)}_{m} \right) $ \\[3pt]
            
            \dashedbox{0.395\textwidth}{
                $p_{i}^{(k+1)} = \frac{1}{M} \sum\limits_{m=1}^{M} 
                \ind{w_{m, i}^{(k,\tau)}=1}$ 
                \hspace*{8pt}
                $\triangleright $ \zoom[0.8]{\texttt{Option I}}
            }
            \vspace*{-8pt}
 
            \framebox[0.41\textwidth][l]{
                $p_{i}^{(k+1)} = \sum\limits_{m=1}^{M} \lambda^{(k)}_{m} \ind{w^{(k,\tau)}_{m,i}=1}$
                \hspace*{1pt}
                $\triangleright $ \zoom[0.8]{\texttt{Option II}}
            }
             
        }
        
    \vspace*{6pt}

        

        \Return{$\{ \hw[k+1], \p[k+1] \}$} to the server
    } 
\end{algorithm}
\subsection{Local Model Training and Transmission} 
We optimize a neural network with a learnable latent weight vector $\lw$.
In the $k$th communication round, we assume all clients are identically initialized by the server, namely, 
$\forall \; m \in [M]$, $\lw[k,0]_{m} = \lw[k]$. 
To reduce the total number of communication rounds, we first let each client conduct local updates to learn the binary weights. 
For each local iteration step, the local latent weight vector $\lw[k,t+1]$ is updated by the gradient descent:  
\begin{equation}\label{eq:lw_update_rule}
    \lw[k,t+1]_{m} = \lw[k,t]_{m} - \eta \; \nabla_{\lw} f_m \left(\varphi(\lw[k,t]_{m}); \xi^{(k,t)}_{m}  \right), 
\end{equation}
where $\xi_{m}^{(k,t)}$ is a mini-batch randomly drawn from $\mathcal{D}_m$ at the $t$th iteration of round $k$. 
After updating for $\tau$ steps, we obtain $\lw[k,\tau]_{m}$ and the corresponding normalized weight vector $\nw[k,\tau]_{m} \in (-1,1)^{d}$ defined as follows:  
\begin{equation}\label{eq:def_latent_weight}
    \nw[k,\tau]_{m} \triangleq \varphi(\lw[k,\tau]_m). 
\end{equation}
We use the stochastic rounding~\cite{alistarh2017qsgd} to draw a randomly quantized version $\hw[k,\tau]_{m}$ using $\nw[k,\tau]_{m}$, i.e.,
\begin{equation}\label{eq:sto_rounding}
    w^{(k,\tau)}_{m,i} = \left\{
        \begin{array}{l @{\;} l}
        +1,  &  \text {with prob. } \srprob^{(k,\tau)}_{m,i} = \frac{1}{2} \left[\widetilde{w}^{(k,\tau)}_{m,i} + 1\right], \\[3pt]
        -1, & \text {with prob. } 1 - \srprob^{(k,\tau)}_{m,i}. 
    \end{array}\right.
\end{equation}
It can be shown that the stochastic rounding is an unbiased procedure, i.e., $\mathbb{E} [\hw[k,\tau]_{m} | \nw[k,\tau]_{m}] = \nw[k,\tau]_{m}$. 
After quantization, the local client will send the binary weights $\hw[k,\tau]_{m}$ to the server for the global model aggregation. 

\begin{figure}
    \begin{overpic}[width=\linewidth]{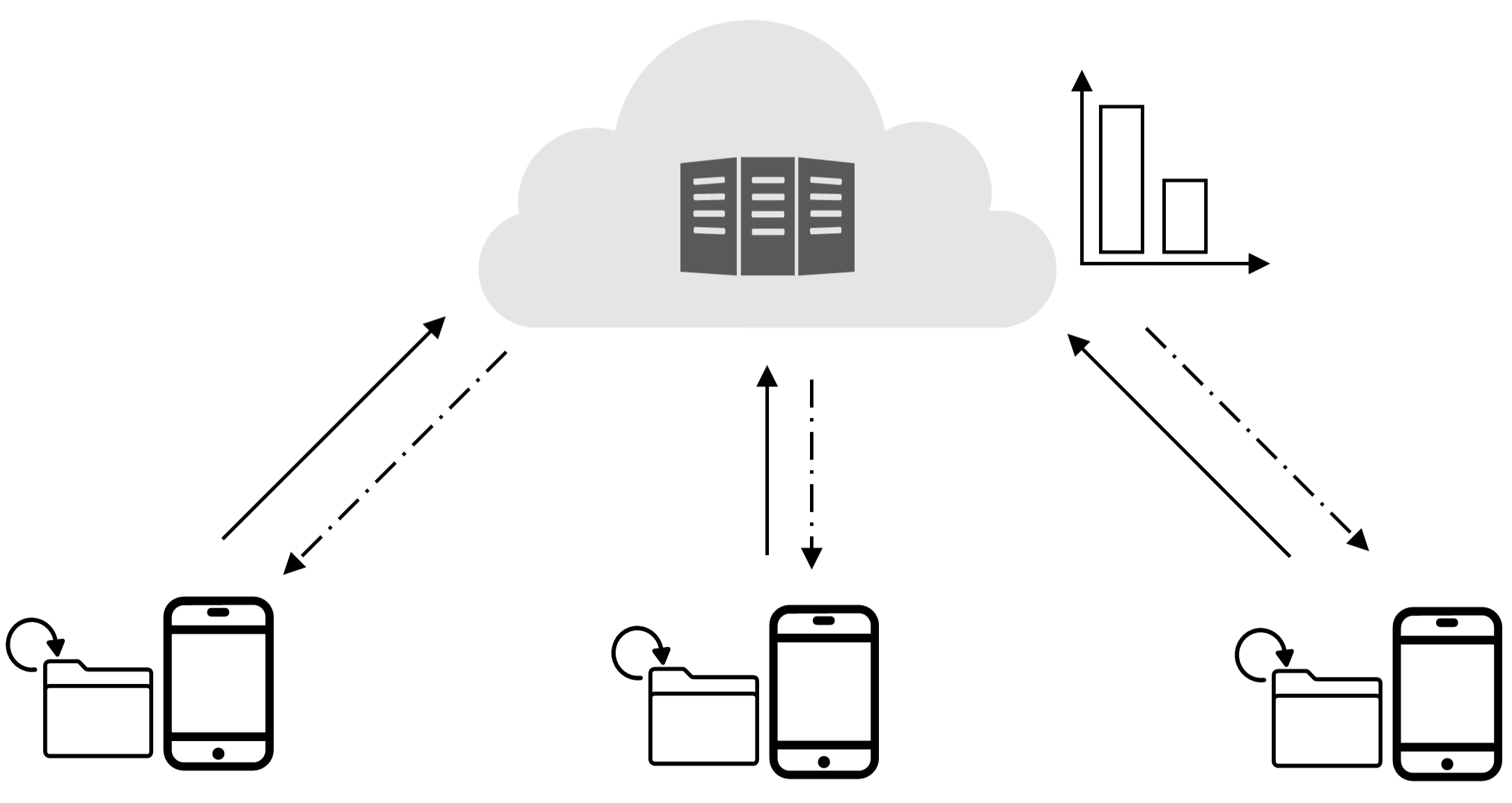}        
        \put(43,23){\zoom{\cirtwo}}
        \put(27,20){\zoom{sends quantized }}
        \put(27,16){\zoom{weight $\hw[k,\tau]_{m}$}}
        
        \put(37,11){\zoom{\cirone}}
        \put(17.5,7){\zoom{local updates }}
        \put(17.5,3){\zoom{that yields $\nw[k,\tau]_{m}$}}
        \put(43.5, 4){\zoom{$\mathcal{D}_{m}$}}
        \put(58, 7){\zoom{$m^{\text{th}}$}}
        \put(58, 4){\zoom{client}}
    
        \put(55, 22){\zoom{\cirfour}}
        \put(55, 19){\zoom{receives the }}
        \put(55, 16){\zoom{soft vote results}}
    
    
        \put(30, 45){\zoom{\cirthree}}
        \put(8,42){\zoom{server calculates the}}
        \put(8,39){\zoom{voting statistics}}
        \put(70.5, 33.5){\zoom[0.55]{$-1$}}
        \put(75.5, 33.5){\zoom[0.55]{$+1$}}
        \put(69, 37.5){\rotatebox{90}{\zoom[0.7]{count}}}

        \end{overpic}
        \caption{
            One round of FedVote is composed of four steps.
            Each client first updates the local model and then sends the quantized weight $\w[k,\tau]_m$ to the server. 
            Later, the server calculates the voting statistics and sends back the soft voting results $\p[k+1]$ to each client. 
            \label{fig:fedvote}  }
\end{figure}

\subsection{Global Model Aggregation and Broadcast}
Once the server gathers the binary weights from all clients, it will perform the aggregation via plurality vote, i.e., $\hw[k+1] = \sign\left( \sum_{m=1}^M \hw[k,\tau]_{m} \right)$. 
A tie in vote will be broken randomly.
In the following lemma, we show that the probability of error reduces exponentially as the number of clients increases.
The proof can be found in \paperappendix{proof:one_shot_fedavg}.
\begin{Lemma}\label{lemma:one_shot_fedvote}
    (One-Shot FedVote) Let $\hw^{*} \in \mathbb{D}^{d}_{2}$ be the optimal solution defined in (\ref{eq:fedvote_objective_function}). 
    For the $m$th client, $\varepsilon_{m,i} \triangleq \prob(w^{(k, \tau)}_{m, i} \neq w_{i}^{*})$ denotes the error probability of the voting result of the $i$th coordinate.
    Suppose the error events $\{w^{(k, \tau)}_{m, i} \neq w_{i}^{*}\}_{m=1}^M$  are mutually independent, and the mean error probability $s_i = \frac{1}{M}\sum_{m=1}^M \varepsilon_{m,i}$ is smaller than $\frac{1}{2}$.
    For the voted weight $\w[k+1]$, we have
    \begin{equation}
        \prob\left(w^{(k+1)}_{i} \neq w_{i}^{*}\right) 
        \leqslant \big[2 s_i \exp(1 - 2 s_i )\big]^{\frac{M}{2}}.
    \end{equation}
\end{Lemma}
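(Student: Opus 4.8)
The plan is to reduce the statement to a Chernoff-type tail bound for a sum of independent (non-identically distributed) Bernoulli random variables. Fix a coordinate $i$ and drop it from the notation. The voted bit $w^{(k+1)}_i$ is wrong precisely when a strict majority (or, in the tie case, at least half with the adverse coin flip) of the $M$ client bits disagree with $w_i^\star$. Let $X_m = \ind{w^{(k,\tau)}_{m,i}\neq w_i^\star}$, so that $X_1,\dots,X_M$ are independent Bernoulli with $\expect[X_m]=\varepsilon_{m,i}$ and $\sumM \expect[X_m] = M s_i$. The error event is contained in $\{\sumM X_m \geqslant M/2\}$ (taking the tie outcome pessimistically). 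Since $s_i < 1/2$, we have $M/2 > M s_i = \expect[\sumM X_m]$, so this is an upper-tail deviation and a Chernoff bound applies.

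The key steps, in order: (1) Write $\prob(w^{(k+1)}_i \neq w_i^\star) \leqslant \prob\big(\sumM X_m \geqslant \tfrac{M}{2}\big)$, justifying that the tie-breaking case is absorbed into the ``$\geqslant$''. (2) Apply the generic Chernoff bound: for any $\lambda > 0$, $\prob(\sumM X_m \geqslant M/2) \leqslant e^{-\lambda M/2}\prod_{m=1}^M \expect[e^{\lambda X_m}] = e^{-\lambda M/2}\prod_{m=1}^M\big(1 + \varepsilon_{m,i}(e^\lambda - 1)\big)$. (3) Use $1+x \leqslant e^x$ on each factor to get $\prod_{m=1}^M(1+\varepsilon_{m,i}(e^\lambda-1)) \leqslant \exp\big((e^\lambda - 1)\sumM \varepsilon_{m,i}\big) = \exp\big(M s_i (e^\lambda - 1)\big)$, so the bound becomes $\exp\big(M[s_i(e^\lambda-1) - \lambda/2]\big)$. (4) Optimize over $\lambda$: the exponent is minimized at $e^\lambda = \tfrac{1}{2s_i}$, i.e. $\lambda = \ln\tfrac{1}{2s_i} > 0$ (positive exactly because $s_i < 1/2$), giving exponent $M\big[s_i(\tfrac{1}{2s_i} - 1) - \tfrac12\ln\tfrac{1}{2s_i}\big] = M\big[\tfrac12 - s_i + \tfrac12\ln(2s_i)\big] = \tfrac{M}{2}\big[1 - 2s_i + \ln(2s_i)\big]$. (5) Re-exponentiate: $\prob(w^{(k+1)}_i\neq w_i^\star) \leqslant \exp\!\big(\tfrac{M}{2}[\ln(2s_i) + 1 - 2s_i]\big) = \big[2s_i \exp(1-2s_i)\big]^{M/2}$, which is exactly the claim.

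The bound is vacuous-safe: when $s_i \to 1/2$ the bracket $2s_i\exp(1-2s_i)\to 1$, consistent with no decay, and for $s_i < 1/2$ the function $2s_i e^{1-2s_i}$ is strictly less than $1$, so the right-hand side decays geometrically in $M$. I do not expect a serious obstacle; the only point requiring a little care is step (1), the precise treatment of ties — one should state that a tie can occur only when $M$ is even and exactly $M/2$ clients are in error, and that even if the random tie-break always resolved against $w_i^\star$, the event is still inside $\{\sumM X_m \geqslant M/2\}$, so the stated inequality holds regardless of the tie-breaking rule. Everything else is the textbook Chernoff argument for independent heterogeneous Bernoulli summands, and the independence hypothesis $\{w^{(k,\tau)}_{m,i}\neq w_i^\star\}_{m=1}^M$ in the lemma is exactly what licenses the factorization of the moment generating function in step (2).
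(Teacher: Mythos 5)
Your proposal is correct and follows essentially the same route as the paper's proof: the same indicator variables, the same Chernoff/Markov bound with the moment generating function factorized by independence, the same $1+x\leqslant e^x$ (equivalently $\ln(1+x)\leqslant x$) relaxation, and the same optimal exponent choice $e^{\lambda}=\tfrac{1}{2s_i}$, which matches the paper's $a=\ln\tfrac{M}{2\mu_{Y_i}}$. Your handling of the tie case with an inequality is in fact slightly more careful than the paper's stated equality, but it does not constitute a different argument.
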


In practice, the number of available clients may be limited in each round, and the local data distribution is often heterogeneous or even time-variant. 
To extend the training to multiple communication rounds, we first use the soft voting to build an empirical distribution of global weight $\hw$, i.e., 
\begin{equation}\label{eq:vote_distribution}
    \widehat{\prob}(w^{(k+1)}_{i} = 1) 
    = \frac{1}{M} \sum_{m=1}^{M} \ind{w^{(k,\tau)}_{m,i}=1 },
\end{equation}
where $\ind{\cdot} \in \{0,\,1\}$ is the indicator function.
Let $p^{(k+1)}_{i} \triangleq \widehat{\prob}(w^{(k+1)}_{i} = 1)$ and 
$\p[k+1] = [p^{(k+1)}_{1}, \dots, p^{(k+1)}_{d}]^\top$.
The global latent parameters can be constructed by following~\eqref{eq:def_latent_weight}: 
\begin{equation}\label{eq:latent_weight_reconstruct}
    \lw[k+1] = \varphi^{-1}(2\, \p[k+1] - 1),
\end{equation} 
where $\varphi^{-1}:(-1,1) \rightarrow \mathbb{R}$ is the inverse of the normalization function $\varphi$. 
We further apply clipping to restrict the range of the probability, namely, $\operatorname{clip}(p^{(k+1)}_i) = \max(p_{\min}, \min(p_{\max}, p^{(k+1)}_i))$, 
where $p_{\min}, p_{\max} \in (0,1)$ are predefined thresholds. 
To keep the notation consistent, we denote $\nw[k+1] \triangleq \varphi(\lw[k+1])$ as the global normalized weight. 
After broadcasting the soft voting results $\p[k+1]$, all clients are synchronized with the same latent weight $\lw[k+1]$ and normalized weight $\nw[k+1]$. 
The learning procedure will repeat until a termination condition is satisfied.  
We relate FedVote to FedAvg in the following lemma. 
The detailed proof can be found in \paperappendix{proof:FedVote_recovers_FedAvg}.
\begin{Lemma}\label{lemma:FedVote_recovers_FedAvg}
(Relationship with FedAvg) For the normalized weights, FedVote recovers FedAvg in expectation: $\mathbb{E} \left[\nw[k+1] \right]  = \frac{1}{M} \sum_{m=1}^{M} \nw[k,\tau]_{m}$, 
where $\nw[k+1]  = \varphi(\lw[k+1])$ and $\nw[k,\tau]_{m} = \varphi(\lw[k,\tau]_{m})$.
\end{Lemma}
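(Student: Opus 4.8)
The overall plan is to unwind the definition of $\nw[k+1]$ coordinate by coordinate and then average over the only source of randomness involved, namely the stochastic rounding of \eqref{eq:sto_rounding}, holding the local latent weights $\lw[k,\tau]_1,\dots,\lw[k,\tau]_M$ fixed. The crux is a single algebraic cancellation: the normalization $\varphi$ that defines the global normalized weight is exactly undone by the $\varphi^{-1}$ used in \eqref{eq:latent_weight_reconstruct} to build the global latent weight, so $\varphi$ vanishes and the map from the clients' binary votes to $\nw[k+1]$ becomes affine. Unbiasedness of the stochastic rounding then finishes the argument.

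Concretely I would proceed in four short steps. \emph{Step 1:} since $\varphi:\mathbb{R}\to(-1,1)$ is invertible, $\varphi\big(\varphi^{-1}(x)\big)=x$ for every $x\in(-1,1)$; applying this with $x=2\,\p[k+1]-1\in(-1,1)^d$ (each empirical frequency $p^{(k+1)}_i$ lies strictly inside $(0,1)$ after the clipping step) gives
\begin{equation*}
    \nw[k+1] = \varphi(\lw[k+1]) = \varphi\big(\varphi^{-1}(2\,\p[k+1]-1)\big) = 2\,\p[k+1]-1 .
\end{equation*}
\emph{Step 2:} substitute the soft-vote definition \eqref{eq:vote_distribution} to obtain the coordinatewise identity $\widetilde{w}^{(k+1)}_i = \frac{2}{M}\sum_{m=1}^M \ind{w^{(k,\tau)}_{m,i}=1} - 1$. \emph{Step 3:} take the conditional expectation over the Bernoulli draws in \eqref{eq:sto_rounding} with $\lw[k,\tau]_1,\dots,\lw[k,\tau]_M$ fixed; by linearity and $\prob\!\big(w^{(k,\tau)}_{m,i}=1\big)=\srprob^{(k,\tau)}_{m,i}=\tfrac12\big(\widetilde{w}^{(k,\tau)}_{m,i}+1\big)$ (equivalently the unbiasedness $\mathbb{E}[\hw[k,\tau]_m\mid\nw[k,\tau]_m]=\nw[k,\tau]_m$ noted just below \eqref{eq:sto_rounding}),
\begin{align*}
    \mathbb{E}\big[\widetilde{w}^{(k+1)}_i\big]
    &= \frac{2}{M}\sum_{m=1}^M \srprob^{(k,\tau)}_{m,i} - 1 \\
    &= \frac{1}{M}\sum_{m=1}^M\big(\widetilde{w}^{(k,\tau)}_{m,i}+1\big) - 1
    = \frac{1}{M}\sum_{m=1}^M \widetilde{w}^{(k,\tau)}_{m,i} .
\end{align*}
\emph{Step 4:} the constants $+1$ and $-1$ cancel, and stacking the $d$ coordinates yields $\mathbb{E}\big[\nw[k+1]\big]=\frac{1}{M}\sum_{m=1}^M \nw[k,\tau]_m$, which is the claimed identity.

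There is no deep obstacle; the proof is a few lines of algebra. The two points that need care — the ``main obstacle,'' such as it is — are bookkeeping: (i) being explicit that $\mathbb{E}[\cdot]$ is taken solely over the stochastic rounding with $\{\lw[k,\tau]_m\}_{m=1}^M$ (equivalently the mini-batches $\{\xi^{(k,t)}_m\}$) held fixed, so that Step 3 is legitimate, and that the unconditional form follows by the tower property since the right-hand side is written in terms of the conditioning variables; and (ii) keeping $\varphi^{-1}$ within its domain $(-1,1)$ in Step 1, which is where the clipping of $\p[k+1]$ into $[p_{\min},p_{\max}]\subset(0,1)$ is used. Strictly speaking, the identity $\mathbb{E}[\nw[k+1]]=\frac1M\sum_m \nw[k,\tau]_m$ holds exactly for the unclipped soft vote; I would state this caveat up front and note that clipping, being a coordinatewise projection onto $[p_{\min},p_{\max}]$, leaves the identity intact on every coordinate on which it is inactive.
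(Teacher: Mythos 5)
Your proof is correct and follows essentially the same route as the paper's: cancel $\varphi\circ\varphi^{-1}$ to get $\nw[k+1]=2\,\p[k+1]-\mathbf{1}$, substitute the soft-vote definition \eqref{eq:vote_distribution}, and invoke the unbiasedness of the stochastic rounding \eqref{eq:sto_rounding} to recover the average of the $\nw[k,\tau]_m$. Your added bookkeeping (conditioning on the local latent weights and the caveat that the identity is exact only for the unclipped soft vote) is a sound refinement of what the paper leaves implicit.
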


\subsection{Reputation-Based Byzantine-FedVote}
In this work, we assume a portion of the participants are Byzantine attackers that can access the data of other clients and modify transmitted message.
\paperlemma{lemma:FedVote_recovers_FedAvg} shows that FedVote is related to FedAvg in expectation. 
As we have reviewed in Section~\ref{section:related_works}, FedAvg cannot tolerate a single Byzantine attacker. 
It indicates that FedVote will exhibit similar poor performance in the presence of multiple adversaries (see Appendix~\ref{appendix:byzantine_fedvote}).
We improve the design  of FedVote based on a reputation voting mechanism, which is a variant of the weighted soft voting method. 

Reputation-based voting was presented in failure-robust large scale grids~\citep{bendahmane2014effectiveness}. 
Recent studies have also proposed reputation management schemes for federated learning~\cite{kang2020reliable, kang2019incentive}. 
However, assessing the reputation of different participants is nontrivial.
For example, the methods in~\cite{kang2020reliable, kang2019incentive} simplified the attacker model and require a reference to assess the reputation of clients.
Instead of building a reference explicitly, FedVote facilitates the reputation assessment by following the majority rule.
We modify~\eqref{eq:vote_distribution} to $\widehat{\prob}(w^{(k+1)}_{i} = 1) = \sum_{m=1}^{M} \lambda_{m}^{(k)} \ind{w^{(k,\tau)}_{m,i}=1}$, 
where $\lambda_m^{(k)}$ is proportional to a credibility score. 
In Byzantine-resilient FedVote (Byzantine-FedVote), we assume that at least $50\%$ of the clients behave normally and treat the plurality vote result as the correct decision. 
The credibility score of the $m$th client is calculated by counting the number of correct votes it makes: $\cred_{m}^{(k+1)} = \frac{1}{d} \sum_{i=1}^{d} \ind{w_{m, i}^{(k,\tau)} = w_{i}^{(k+1)}}$. 
Through multiple rounds, we track the credibility of a local client by taking an exponential moving average over the communication rounds, namely, 
$
    \nu^{(k+1)}_{m} = \beta \, \nu^{(k)}_{m} + (1 - \beta) \, \cred_{m}^{(k+1)},
$
where $\beta\in(0,1)$ is a predefined coefficient. 
The weight $\lambda_{m}^{(k)}$  is designed as
$
    \lambda_{m}^{(k)} = \nu^{(k)}_{m}/\sum_{m=1}^{M} \nu^{(k)}_{m}.
$

Although we focus on the binary weights in this section, the scheme can be naturally extended to quantized neural networks of more discrete levels. 
We will briefly discuss the implementation for ternary weights in Section \ref{section:experiments}.

\section{Analysis of Algorithm}\label{section:algorithm_analysis}
In this section, we present the theoretical analysis of FedVote when local data are independent and identically distributed (i.i.d.). 
The empirical results in the non-i.i.d. setting are discussed in Section \ref{section:experiments}. 
We consider nonconvex objective functions for neural network optimization. 
We use the temporal average of the gradient norm as an indicator of convergence, which is commonly adopted in the literature~\cite{reisizadeh2020fedpaq, haddadpour2020fedcom}. 
The algorithm converges to a stationary point when the gradient norm is sufficiently small.
We denote the stochastic local gradient by
$\stograd[k,t] \triangleq \nabla_{\lw} f_m(\varphi(\lw[k,t]_{m}); \xi_{m}^{(k,t)})$. 
The local true gradient and global true gradient will be denoted by $\grad[k,t]_m \triangleq \expectXi{\stograd[k,t]}$, $\grad[k] \triangleq \nabla_{\lw} f(\varphi(\lw[k]))$, respectively.
In addition, let $\quanterr[k]_{m} \triangleq \nw[k,\tau]_{m} - \hw[k,\tau]_{m}$ denote the error introduced by stochastic rounding. 
According to its unbiased property, $ \expect_{\srpr} [\quanterr[k]_{m}] = \boldsymbol{0}$. 
With the aforementioned notations, we state five assumptions for the convergence analysis. 

\subsection{Assumptions}
\begin{Assumption}\label{assumption:lower_bound}
    (Lower bound) $\forall \; \lw \in \mathbb{R}^{d}, \, \nw \in (-1,1)^{d}$, the objective function is 
    lower bounded by a constant $f^*$, i.e., $f(\nw) \geqslant f^{*} = \min_{\lw \in \mathbb{R}^{d}} f(\varphi(\lw))$. 
\end{Assumption}
\begin{Assumption}\label{assumption:Lsmooth}
    ($L$-smoothness) \hspace*{2pt} $\forall \; \nw_1, \nw_2 \in (-1,1)^d$, $m \in \{1,\dots, M\}$,  
    there exists some nonnegative $L$ such that $\|\nabla f_{m}(\nw_1) - \nabla f_{m}(\nw_2) \|_2 \leqslant L \, \|\nw_1 - \nw_2 \|_2.$
\end{Assumption}
Assumptions \ref{assumption:lower_bound} to \ref{assumption:Lsmooth} are common for necessary analysis~\citep{wang2018cooperative}. 
We limit the range of the normalized weight $\nw$ while in a typical setting there is no restriction to the model weight. 

\begin{Assumption}\label{assumption:normalization_function}
The normalization function $\varphi: \mathbb{R} \rightarrow (-1,1)$ is strictly increasing. 
In particular, we assume its first derivative is bounded for all $h^{(k,t)}_{m,i}$, i.e.,
$\frac{\diff  }{\diff h } \varphi(h^{(k,t)}_{m,i}) \in [\lbound, \ubound]$,
where $\lbound,\, \ubound$ are positive parameters independent of $k,\,t,\, m,$ and $i$.
\end{Assumption}
\paperassumption{assumption:normalization_function} is not difficult to satisfy in practice.
For example, let $\varphi(h) = \tanh(a h)$, we have $\varphi^{\prime}(h) = a\left[1 - \tanh^2(ah)\right] $, with $\ubound=a$. 
Note that $\varphi$ quickly saturates with a large $h$ in the local updating. 
On the other hand, the empirical Bernoulli parameter $p_i$ will be clipped for stability, which indicates that $h^{(k,t)}_{m,i}$ will be upper bounded by certain $h_{\textrm{B}}$. 
In this sense, we have $\lbound = a\left[1-\tanh^{2}(ah_{\textrm{B}})\right]$. 
The next two assumptions bound the variance of the stochastic gradient and quantization noise.  
\begin{Assumption}\label{assumption:sto_grad}
    The stochastic gradient has bounded variance, i.e., $\expect \normsq{ \grad[k,t]_m - \stograd[k,t] } \leqslant \sigma_{\varepsilon}^{2}$,  
    where $\sigma_{\varepsilon}^2$ is a fixed variance independent of $k$, $t$ and $m$.
\end{Assumption}
\begin{Assumption}\label{assumption:quant}
    The quantization error $\quanterr[k]_{m}$ has bounded variance, i.e., $\expect \normsq{\quanterr[k]_{m}} \leqslant  \sigma_{k}^{2}$, 
    where $\sigma_{k}^{2}$ is a fixed variance independent of $m$.
\end{Assumption}

Note that quantization error is affected by the quantizer type and the 
corresponding input. For example, if the normalization function $\varphi$ approximates the sign function very well, the stochastic quantization error will be close to zero. 
Formally, the upper bound $\sigma_{\zeta}^2$ can be viewed as a function of input dimension $d$, which we formulate in the following lemma. 
The proof is in \paperappendix{proof:sto_rounding_quantization_error}.
\begin{Lemma}\label{lemma:sto_rounding_quantization_error}
Suppose we have an input $\vecIn \in (-1,1)^{d}$ for the quantizer $\srQaunt$ defined in (\ref{eq:sto_rounding}), 
then the quantization error satisfies $\mathbb{E} \left[  \normsq{ \srQaunt (\vecIn) -  \vecIn} \big| \vecIn \right] = d - \normsq{\vecIn}$. 
\end{Lemma}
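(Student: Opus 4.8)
The plan is to exploit that the stochastic-rounding quantizer $\srQaunt$ of \eqref{eq:sto_rounding} acts independently coordinate by coordinate, so that $\normsq{\srQaunt(\vecIn)-\vecIn} = \sum_{i=1}^{d} (\srQaunt(\vecIn)_i - a_i)^2$ and the conditional expectation splits into a sum of $d$ per-coordinate terms, each of which is simply the variance of a two-valued random variable.

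First I would fix a coordinate $i$ and set $X_i \triangleq \srQaunt(\vecIn)_i$; by \eqref{eq:sto_rounding}, with $a_i$ in the role of the normalized input, $X_i = +1$ with probability $\tfrac{1}{2}(a_i+1)$ and $X_i = -1$ otherwise, which is a valid distribution precisely because $\vecIn \in (-1,1)^d$ forces $\tfrac{1}{2}(a_i+1) \in (0,1)$. Two elementary facts then drive the argument: (i) $X_i$ is unbiased, $\mathbb{E}[X_i \mid \vecIn] = a_i$, which is the stochastic-rounding property recorded just after \eqref{eq:sto_rounding}; and (ii) $X_i^2 = 1$ surely, since $X_i \in \{-1,+1\}$.

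Next I would combine these: expanding the per-coordinate squared error and using unbiasedness turns $\mathbb{E}[(X_i - a_i)^2 \mid \vecIn]$ into the variance $\mathbb{E}[X_i^2 \mid \vecIn] - a_i^2$, and fact (ii) collapses this to $1 - a_i^2$. Summing over $i = 1,\dots,d$ and invoking linearity of expectation yields $\mathbb{E}[\normsq{\srQaunt(\vecIn)-\vecIn} \mid \vecIn] = \sum_{i=1}^{d} (1 - a_i^2) = d - \normsq{\vecIn}$, as claimed.

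There is essentially no obstacle here: the statement is a one-line variance computation once the coordinate-wise decomposition is in place. The only subtlety worth flagging is that the collapse $X_i^2 = 1$ is peculiar to the two-level quantizer, so an analogous identity for a quantizer with more discrete levels would require carrying the actual support points through the variance calculation rather than obtaining the clean $d - \normsq{\vecIn}$ form.
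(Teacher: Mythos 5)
Your proposal is correct and follows essentially the same route as the paper's proof: a coordinate-wise decomposition, the unbiasedness $\mathbb{E}[\hat a_i \mid \vecIn] = a_i$ to reduce each term to $\mathbb{E}[\hat a_i^2 \mid \vecIn] - a_i^2$, and the fact that $\hat a_i^2 = 1$ for the two-level quantizer, giving $d - \normsq{\vecIn}$.
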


For existing algorithms quantizing the model update $\accgrad[k]_{m} \triangleq \param[k] -  \param[k,\tau]_m$, 
the quantizer has the property $\mathbb{E} \left[  \normsq{Q(\x) -  \x} \big| \x \right] \leqslant q \normsq{\x}$. 
With a fixed quantization step, $q$ increases when the input dimension $d$ increases \citep{basu2020qsparse}. 
We state the result for a widely-used quantizer, QSGD \citep{alistarh2017qsgd}, which has been adopted in FedPAQ, in the following lemma.
\begin{Lemma}\label{lemma:fedpaq_quantization_error}
    Suppose we have an input $\x \in \mathbb{R}^{d}$ for the QSGD  quantizer $Q$.
    In the coarse quantization scenario, the quantization error satisfies
    $
        \mathbb{E} \left[  \normsq{  Q(\x)-\x} \big| \x \right] = \order{d^{\frac{1}{2}}} \normsq{\x}.
    $
\end{Lemma}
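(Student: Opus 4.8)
The plan is to work coordinate-wise from the explicit form of the QSGD quantizer $Q=Q_s$ with $s$ levels. For $\x\neq\boldsymbol{0}$, write $a_i = |x_i|/\norm{\x}\in[0,1]$ and let $\ell_i\in\{0,\dots,s-1\}$ be the integer with $a_i\in[\ell_i/s,(\ell_i+1)/s]$; then $Q(x_i) = \norm{\x}\,\sign(x_i)\,\xi_i$, where $\xi_i$ is the two-point random variable taking the value $(\ell_i+1)/s$ with probability $p_i = s a_i-\ell_i$ and $\ell_i/s$ otherwise (for $\x=\boldsymbol{0}$ the claim is immediate since $Q(\boldsymbol{0})=\boldsymbol{0}$). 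First I would record the standard facts that $\xi_i$ is unbiased, $\expect[\xi_i\mid\x] = a_i$, so that $\expect[\normsq{Q(\x)-\x}\mid\x] = \normsq{\x}\sum_{i=1}^{d}\mathrm{Var}(\xi_i)$, and that since $\xi_i$ is supported on two points a distance $1/s$ apart, $\mathrm{Var}(\xi_i) = p_i(1-p_i)/s^2$.

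The crux is to bound $\sum_i\mathrm{Var}(\xi_i)$ by $\order{\sqrt d}$ rather than the trivial $\order{d}$. I would establish the per-coordinate estimate $\mathrm{Var}(\xi_i)\le\min\{a_i/s,\,1/(4s^2)\}$ by a short two-case argument on $\ell_i$: if $\ell_i=0$ (equivalently $a_i\le1/s$) then $p_i = s a_i$ and $\mathrm{Var}(\xi_i)\le p_i/s^2 = a_i/s$, while $p_i(1-p_i)\le 1/4$ still gives $\mathrm{Var}(\xi_i)\le 1/(4s^2)$; if $\ell_i\ge1$ then $a_i\ge1/s$ and the universal inequality $p_i(1-p_i)\le1/4$ yields $\mathrm{Var}(\xi_i)\le1/(4s^2)\le a_i/s$. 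Summing the $a_i/s$ form and applying Cauchy--Schwarz, $\sum_i a_i = \| \x \|_1/\norm{\x}\le\sqrt d$, gives $\expect[\normsq{Q(\x)-\x}\mid\x]\le\min\{\sqrt d/s,\,d/(4s^2)\}\,\normsq{\x}$, which is Lemma~3.1 of \cite{alistarh2017qsgd}. In the coarse-quantization regime, where $s$ is a small constant, the first term is binding and the bound reduces to $\order{\sqrt d}\,\normsq{\x}$.

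To read the statement as an equality rather than merely an upper bound, I would add a matching lower bound. Taking $\x$ with all entries equal gives $a_i = d^{-1/2}$ for every $i$, so for any fixed $s$ and $d>s^2$ each $\ell_i=0$ and $\mathrm{Var}(\xi_i) = \tfrac{1}{s^2}\cdot\tfrac{s}{\sqrt d}\bigl(1-\tfrac{s}{\sqrt d}\bigr)$; summing over the $d$ coordinates yields $\expect[\normsq{Q(\x)-\x}\mid\x] = \bigl(\sqrt d/s-1\bigr)\,\normsq{\x} = \Omega(\sqrt d)\,\normsq{\x}$ for $d$ large. Together with the upper bound this pins the conditional quantization error at $\Theta(\sqrt d)\,\normsq{\x}$ --- exactly the undesirable scaling we want to contrast against the $d-\normsq{\vecIn}$ behavior of \paperlemma{lemma:sto_rounding_quantization_error}. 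The only point requiring care is the per-coordinate case split (including the boundary $a_i\in\{\ell_i/s\}$, where $p_i\in\{0,1\}$ and $\mathrm{Var}(\xi_i)=0$, so the bound holds trivially); everything else is bookkeeping, so I expect that case analysis together with the Cauchy--Schwarz step --- the one place where the improvement from $\order{d}$ to $\order{\sqrt d}$ actually happens --- to be the main obstacle.
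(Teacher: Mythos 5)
Your proof is correct, and it takes a somewhat more general route than the paper. The paper interprets ``coarse quantization'' as simply fixing $s=1$ in QSGD, and then computes the conditional error exactly: $\mathbb{E}\left[\normsq{Q(\x)-\x}\,\big|\,\x\right]=\|\x\|_2\|\x\|_1-\normsq{\x}\leqslant(\sqrt{d}-1)\normsq{\x}$, deferring to the cited literature for other quantizers. You instead re-derive the general-$s$ QSGD variance bound (the per-coordinate estimate $\mathrm{Var}(\xi_i)\leqslant\min\{a_i/s,\,1/(4s^2)\}$ via the case split on $\ell_i$, summed and combined with $\|\x\|_1\leqslant\sqrt{d}\,\|\x\|_2$) and then specialize to constant $s$; your case analysis is sound and the decomposition $\mathbb{E}[\normsq{Q(\x)-\x}\mid\x]=\normsq{\x}\sum_i\mathrm{Var}(\xi_i)$ is exactly right. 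The key step is the same in both arguments --- the $\ell_1$-versus-$\ell_2$ comparison is where $\order{d}$ improves to $\order{d^{1/2}}$ --- but your version buys generality in $s$, while the paper's buys brevity and an exact expression at $s=1$. Your added lower bound (all-equal entries giving $(\sqrt{d}/s-1)\normsq{\x}$) is a worthwhile extra not in the paper: it shows the $\sqrt{d}$ scaling is attained in the worst case, which is precisely what \paperremark{remark:fedvote_outperforms_fedpaq} relies on when contrasting with \paperlemma{lemma:sto_rounding_quantization_error}; just note that the Big-$O$ ``equality'' in the statement cannot hold as a two-sided bound uniformly in $\x$ (a $1$-sparse input has zero error), so your tightness claim is correctly read as worst-case over inputs.
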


\subsection{Convergence Analysis}\label{subsection:convergence_analysis}
We state the convergence results in the following theorem. The proof can be found in \paperappendix{proof:convergence_iid_setting}.
\begin{Theorem}\label{theorem:convergence_iid_setting}
    For FedVote under Assumptions \allAssmp, 
    let the learning rate $\eta = O\left((\frac{c_1}{c_2})^2 \frac{1}{L\tau \sqrt{K}}\right)$, 
    then after $K$ rounds of communication, we have
    \begin{align}\label{eq:convergence_bound}
        & \frac{1}{K} \sum_{k=0}^{K-1} \lbound^2 \expect \normsq{\nabla f(\nw[k])} 
            \leqslant  \frac{2 \left[f(\nw[0]) - f(\nw^{*}) \right] }{\eta\tau K} \nonumber \\
        &\qquad + \ubound^2 L \eta \left[ \frac{1}{M} +  \frac{\lbound^2 L\eta (\tau-1)}{2} \right] \sigma_{\varepsilon}^2 \nonumber \\
        &\qquad +  \frac{L}{\eta\tau K M} \sum_{k=0}^{K-1}\sigma_{k}^2  
        + \frac{2(\ubound^2 - \lbound^2)}{\tau MK} \sum_{k=0}^{K-1}\sum_{m=1}^{M} R^{(k)}_{m},  
    \end{align}
    where $R^{(k)}_{m}\triangleq - \sum\limits_{t=0}^{\tau-1}\!\sum\limits_{i\notin\posindex} \expect\left[ (\nabla f(\nw[k]))_i  (\nabla f(\nw[k,t]_{m}))_i \right]$ and $\posindex  \triangleq \left\{i \in [d] \big| \grad[k]_{i} \grad[k, t]_{m,i} \geqslant 0 \right\}$. 
\end{Theorem}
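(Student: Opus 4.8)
The plan is to run the standard nonconvex descent argument, but applied to the objective viewed as a function of the \emph{normalized} weights $\nw$, and then telescoped over the $K$ rounds. Since $f=\tfrac1M\sumM f_m$ is $L$-smooth in $\nw$ by \paperassumption{assumption:Lsmooth}, I would start from $f(\nw[k+1])\leqslant f(\nw[k])+\inprod{\nabla f(\nw[k])}{\nw[k+1]-\nw[k]}+\tfrac{L}{2}\normsq{\nw[k+1]-\nw[k]}$ and take expectations. The first simplification uses the fact that, because $w^{(k,\tau)}_{m,i}\in\{-1,1\}$, Option~I of the vote gives $2\p[k+1]-\mathbf 1=\tfrac1M\sumM\hw[k,\tau]_m$, hence $\nw[k+1]=\varphi\big(\varphi^{-1}(2\p[k+1]-\mathbf 1)\big)=\tfrac1M\sumM\hw[k,\tau]_m=\tfrac1M\sumM\nw[k,\tau]_m-\tfrac1M\sumM\quanterr[k]_m$ (the clipping only narrows the range and is absorbed into the constants). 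By the unbiasedness $\expect[\quanterr[k]_m\mid\cdot]=\boldsymbol 0$ the cross term vanishes in expectation (this is \paperlemma{lemma:FedVote_recovers_FedAvg}), while $\expect\normsq{\tfrac1M\sumM\quanterr[k]_m}\leqslant\sigma_k^2/M$ by independence of the rounding across clients together with \paperassumption{assumption:quant}; the latter is precisely the source of the $\tfrac{L}{\eta\tau KM}\sum_k\sigma_k^2$ term.

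For the linear term I would expand $\nw[k,\tau]_m-\nw[k]=\varphi(\lw[k,\tau]_m)-\varphi(\lw[k])$ coordinatewise by the mean value theorem, which introduces a factor in $[\lbound,\ubound]$ thanks to \paperassumption{assumption:normalization_function}, and substitute $\lw[k,\tau]_m-\lw[k]=-\eta\sumtau\stograd[k,t]$, whose conditional mean is $-\eta\sumtau\grad[k,t]_m=-\eta\sumtau\diag{\varphi'(\lw[k,t]_m)}\nabla f(\nw[k,t]_m)$ in the i.i.d.\ setting — a second factor in $[\lbound,\ubound]$. Thus, up to the mean-zero stochastic-gradient noise, $\expect\inprod{\nabla f(\nw[k])}{\nw[k]-\nw[k,\tau]_m}$ equals a sum over $t$ and over coordinates $i$ of $\varphi'_1\varphi'_2\,(\nabla f(\nw[k]))_i(\nabla f(\nw[k,t]_m))_i$ with $\varphi'_1,\varphi'_2\in[\lbound,\ubound]$. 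Because $\varphi'>0$, the sign of $(\nabla f(\nw[k]))_i(\nabla f(\nw[k,t]_m))_i$ coincides with that of $\grad[k]_i\grad[k,t]_{m,i}$, so splitting the coordinate sum according to $\posindex$ — lower-bounding the coefficient by $\lbound^2$ where the product is nonnegative, and upper-bounding it by $\ubound^2$ where it is negative — produces a clean $\lbound^2\,\inprod{\nabla f(\nw[k])}{\nabla f(\nw[k,t]_m)}$ contribution (which drives the descent) plus a $(\ubound^2-\lbound^2)$ multiple of exactly $R^{(k)}_m$.

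Next I would convert $\inprod{\nabla f(\nw[k])}{\nabla f(\nw[k,t]_m)}$ into $\tfrac12\normsq{\nabla f(\nw[k])}$ minus a drift term via Cauchy--Schwarz, \paperassumption{assumption:Lsmooth}, and Young's inequality, and bound the local drift $\normsq{\nw[k,t]_m-\nw[k]}\leqslant\ubound^2\eta^2 t\sum_{s<t}\normsq{\stograd[k,s]}$; taking expectations and invoking \paperassumption{assumption:sto_grad} splits off $\sigma_\varepsilon^2$, while $L$-smoothness again relates $\normsq{\grad[k,s]_m}$ back to $\normsq{\nabla f(\nw[k])}$, producing the $\big[\tfrac1M+\tfrac{\lbound^2 L\eta(\tau-1)}{2}\big]\sigma_\varepsilon^2$ term (the $\tfrac1M$ piece coming from the same noise inside $\normsq{\nw[k+1]-\nw[k]}$, where client-wise independence saves a factor $M$) plus a higher-order-in-$\eta$ multiple of $\normsq{\nabla f(\nw[k])}$. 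Summing the per-round inequality over $k=0,\dots,K-1$, telescoping the $f(\nw[k])$ terms, lower-bounding $f(\nw[K])\geqslant f^*=f(\nw^*)$ via \paperassumption{assumption:lower_bound}, dividing by $\eta\tau K/2$, and choosing $\eta=O\big((\lbound/\ubound)^2/(L\tau\sqrt K)\big)$ so that the leftover $\normsq{\nabla f(\nw[k])}$ terms are at most half the $\lbound^2$ coefficient on the left and can be absorbed, then yields \eqref{eq:convergence_bound}.

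The main obstacle is the bookkeeping in the second step: one must carefully track the two chain-rule factors (one from the mean value theorem applied to $\varphi$, one from $\nabla_{\lw}f_m=\diag{\varphi'}\nabla f_m$), verify that the sign pattern used in the coordinate split is exactly the one defining $\posindex$, and perform the split so that precisely the coefficients $\lbound^2$ and $\ubound^2-\lbound^2$ — with no spurious $\lbound/\ubound$ ratios — come out. A related subtlety is that the mean value theorem point is itself random, so it must be kept inside the expectation and handled only through the deterministic bound $\varphi'\in[\lbound,\ubound]$; similarly one must be careful that the filtration used for the conditional-mean statements is the natural one under which $\lw[k,t]_m$ is measurable, so that \paperassumption{assumption:sto_grad} and the unbiasedness of the rounding apply.
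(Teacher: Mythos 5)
Your proposal is correct and follows essentially the same route as the paper: the $L$-smoothness descent step in the normalized weights, the reconstruction $\nw[k+1]=\frac{1}{M}\sumM\hw[k,\tau]_{m}$ with the unbiased rounding error handled via \paperassumption{assumption:quant}, the mean-value-theorem/chain-rule argument producing two $\varphi'$ factors and the coordinatewise sign split that yields exactly the $\lbound^2$ and $(\ubound^2-\lbound^2)R^{(k)}_{m}$ contributions, the local-drift bound under \paperassumption{assumption:sto_grad}, and the learning-rate restriction followed by telescoping over $K$ rounds. The only cosmetic difference is that you phrase the handling of the residual gradient terms as absorption into the left-hand side, whereas the paper chooses $\eta$ so that their coefficient is nonpositive and drops them; these are equivalent.
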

The choice of the learning rate $\eta$ is discussed in \paperappendix{proof:convergence_iid_setting}. 
We further comment on the result in Theorem~\ref{theorem:convergence_iid_setting} below. 

\begin{Remark}
When there is no normalization function and quantization, i.e., $\varphi(x) = x$ with $\lbound = \ubound =1$, $\sigma_{k}^2 = 0$,
\papertheorem{theorem:convergence_iid_setting} recovers the result obtained in \cite{wang2018cooperative}.
\end{Remark}

%
\begin{figure}
    \centering
    \captionsetup[subfigure]{aboveskip=1pt}
    \captionsetup{aboveskip=0pt, belowskip=-1pt}
    \subcaptionbox{\label{subfig:accumulative_grad_distribution}}[0.24\textwidth]{
        \includegraphics[width=0.24\textwidth]{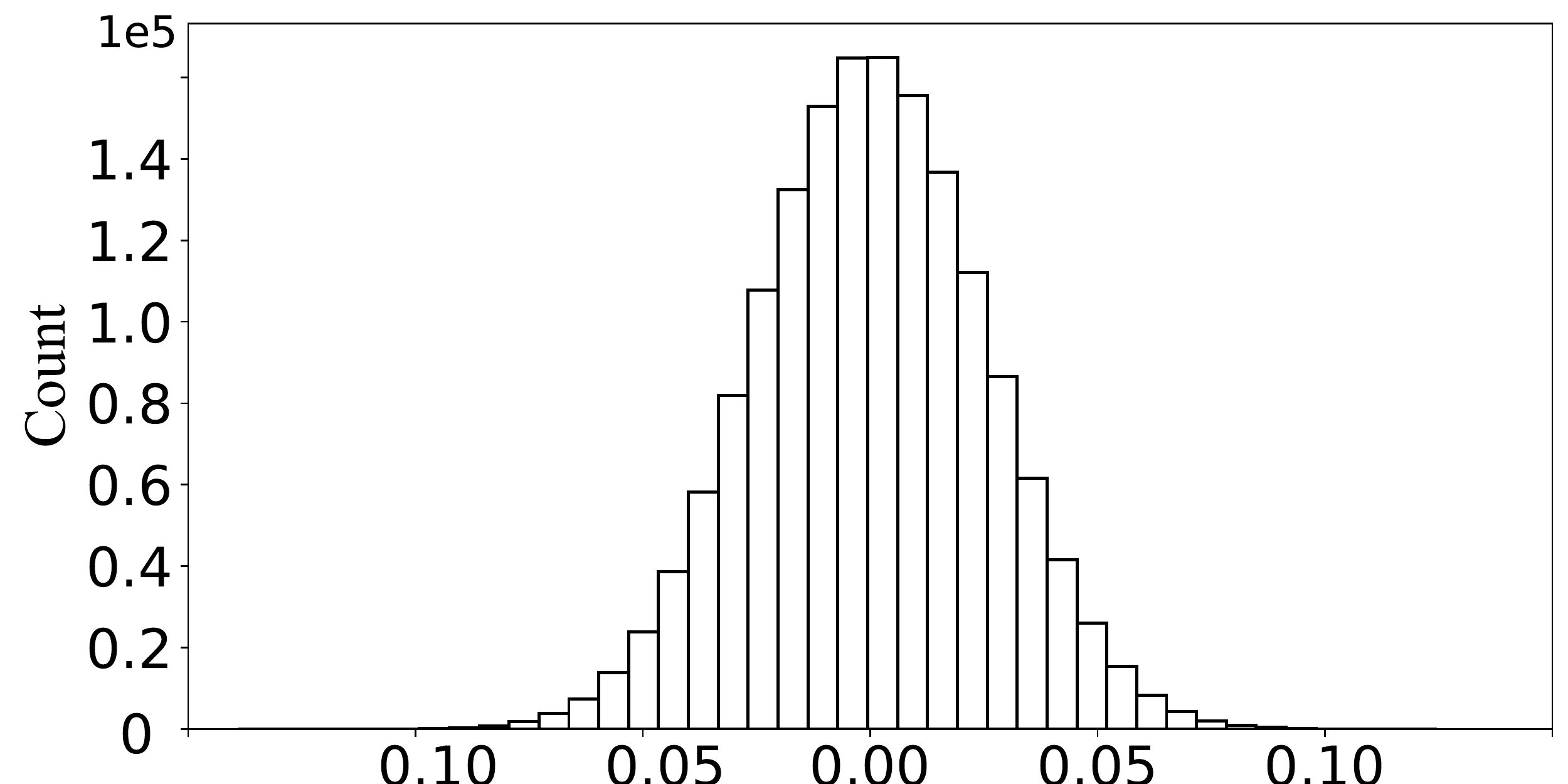}
    }
    \subcaptionbox{\label{subfig:latent_prob_distribution}}[0.24\textwidth]{
        \includegraphics[width=0.24\textwidth]{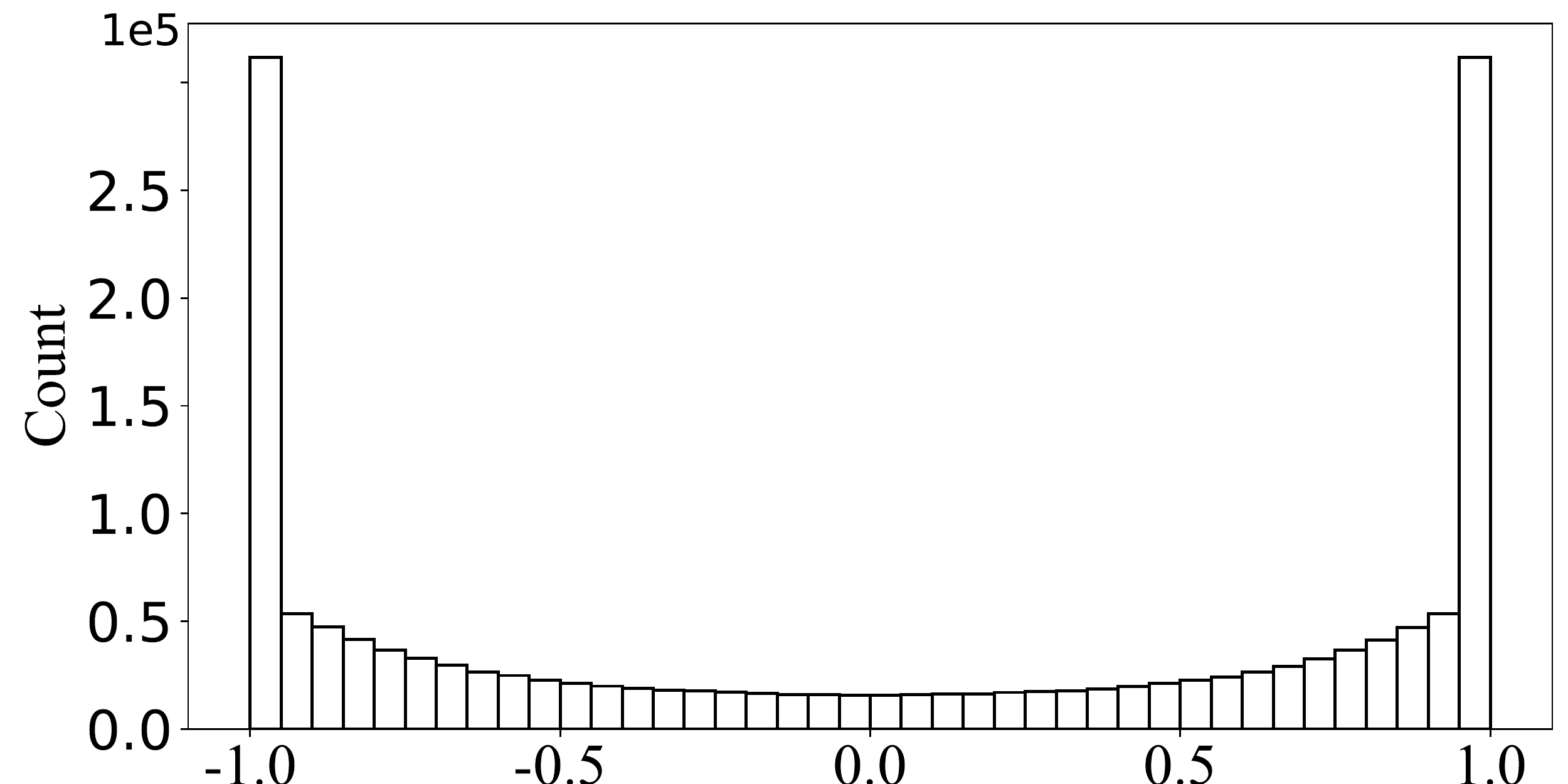}
    }\\[8pt]
    \caption{
    Histograms of (a) model updates $\delta^{(k)}_{m,i}$ and 
    (b) binary weight probabilities $\pi^{(k,\tau)}_{m,i}$. 
    We trained a LeNet model on the MNIST dataset for a single communication round and inspect the histograms on one client.   
    }
    \label{fig:quantizer_input_distribution}
\end{figure}
To discuss the impact of quantization error, consider the distribution of different inputs. 
For the model update $\accgrad[k]_{m}$, we expect the central limit theorem to render its distribution shape, where each entry $\delta^{(k)}_{m,i}$ follows the Gaussian distribution. 
For the Bernoulli probability $\srpr_m^{(k,\tau)}$, we expect the Beta distribution as the conjugate prior to render its distribution shape, where each entry $\srprob_{m,i}^{(k,\tau)}$ follows the symmetric Beta distribution.  
See \paperfig{fig:quantizer_input_distribution} for the empirical results. 

\begin{figure*}[!tb]
    \centering
    \subcaptionbox{}[0.32\textwidth]{
        \includegraphics[width=\linewidth]{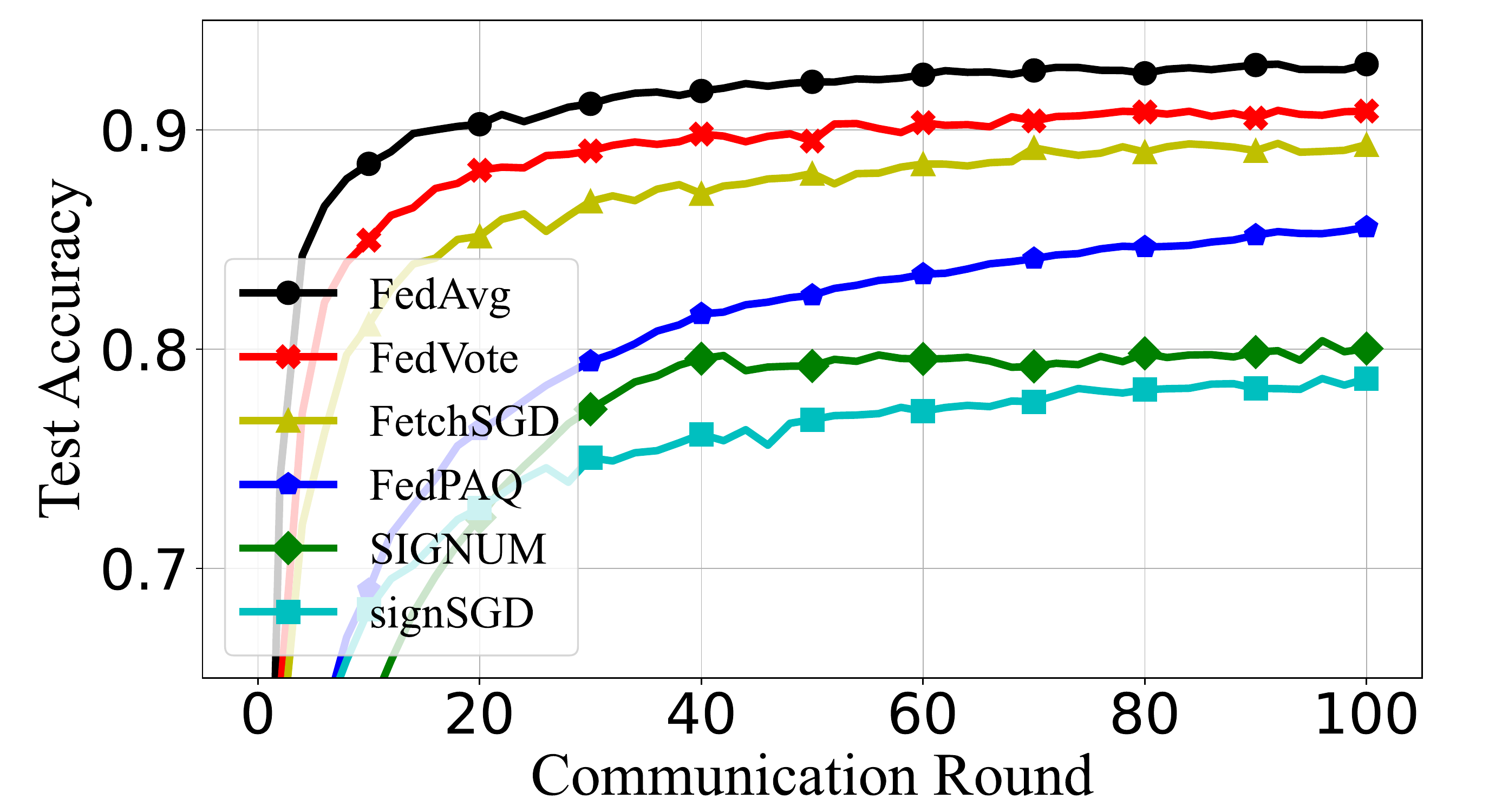}
    }
    \subcaptionbox{}[0.32\textwidth]{
        \includegraphics[width=\linewidth]{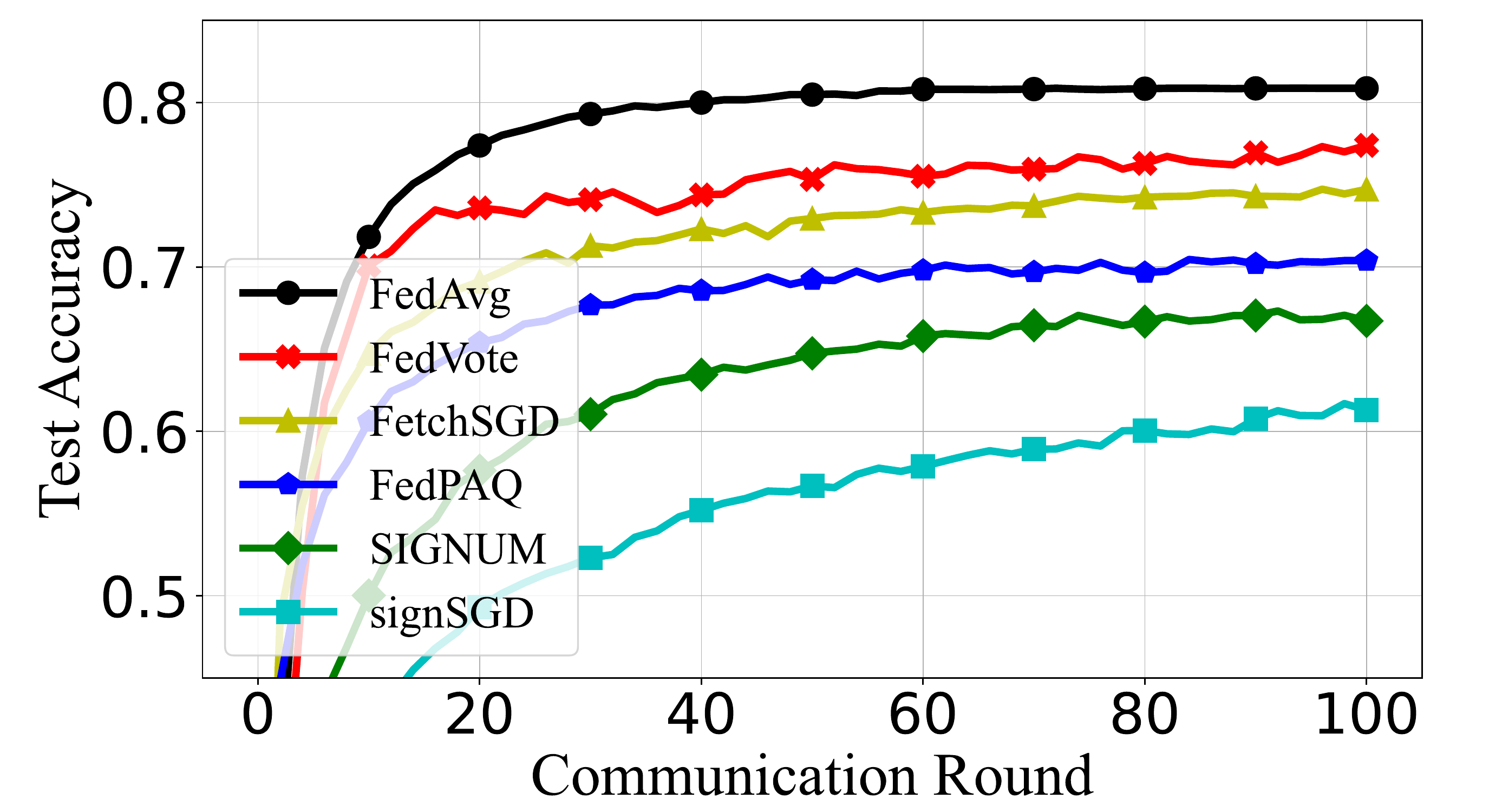}
    }
    \subcaptionbox{}[0.32\textwidth]{
        \includegraphics[width=\linewidth]{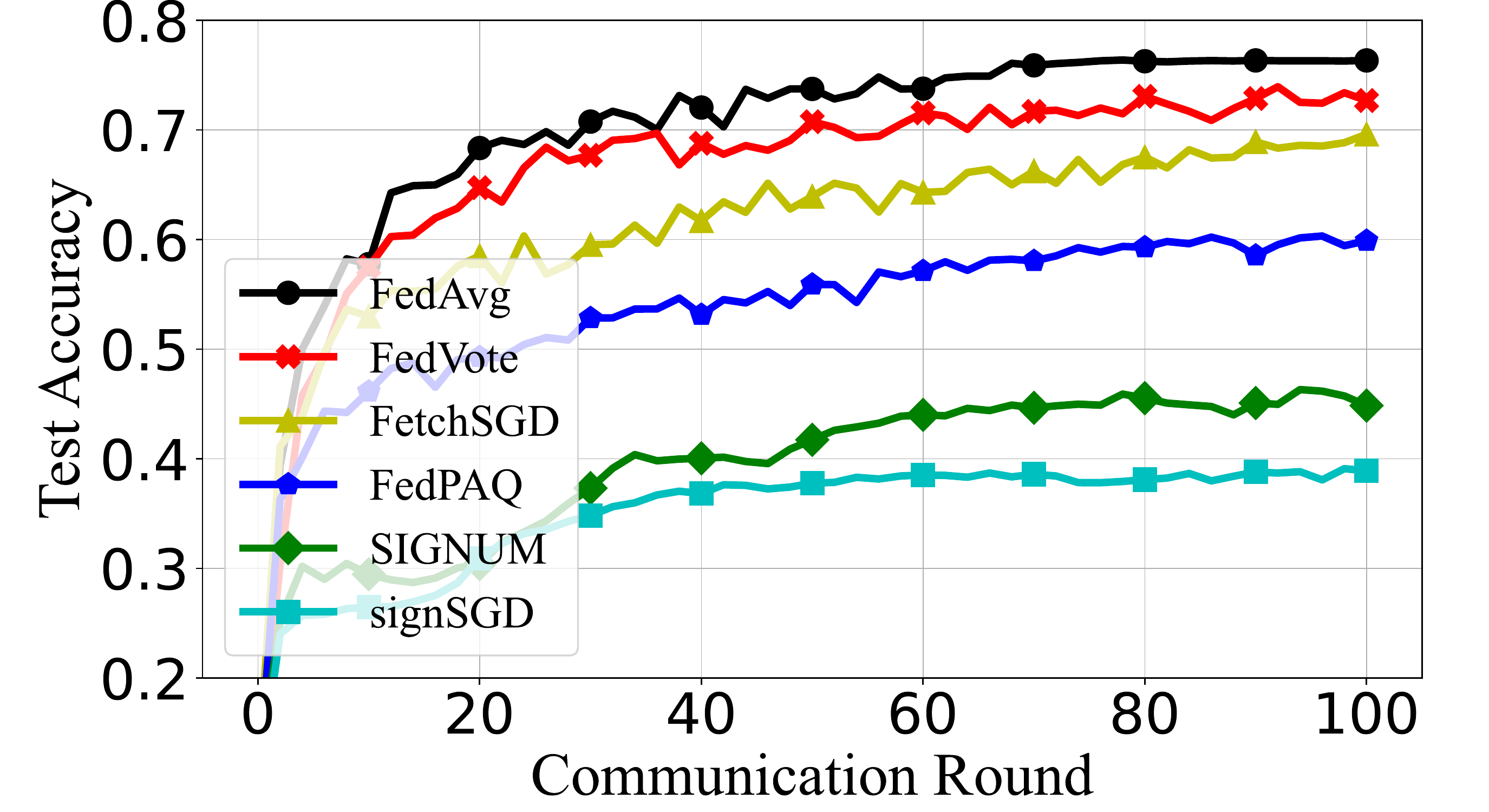}
    }
    \caption{
    Test accuracy versus communication round for different communication-efficient methods on 
    (a) cross-device non-i.i.d. FEMNIST with $300$ clients, 
    (b) cross-silo non-i.i.d. CIFAR-10 with $31$ clients, and 
    (c) cross-device non-i.i.d. CIFAR-10 with $100$ clients.
    FedVote outperforms other communication-efficient methods by achieving the highest accuracy given the communication round.}  
    \label{fig:lr_curves}
\end{figure*}

\begin{Remark}\label{remark:fedvote_outperforms_fedpaq}
Following the analysis framework in \papertheorem{theorem:convergence_iid_setting}, the order-wise convergence rate can be formulated as $\frac{1}{K} \sum_{k=0}^{K-1} \mathbb{E}\left\|\nabla f\left(\mathbf{w}^{(k)}\right)\right\|_{2}^{2}=\order{\frac{1}{\sqrt{K}}} + E(d)$, where $E(d)$ is the error introduced by the quantization. 
For FedVote with the normalized weight $\nw[k,\tau]_{m}$ as the input, when $\srprob_{m,i}^{(k,\tau)}$'s follow the symmetric Beta distribution, it can be shown that $\expect \|\quanterr[k]_{m}\|^2_2 = \order{d}$ based on Lemma \ref{lemma:sto_rounding_quantization_error}. 
For FedPAQ with the model update $\accgrad[k]_{m}$ as the input, when $\delta^{(k)}_{m,i}$'s follow the Gaussian distribution, it can be shown that $\expect \normsq{Q(\accgrad[k]_{m}) - \accgrad[k]_{m}} = \order{d^{3/2}}$ based on Lemma \ref{lemma:fedpaq_quantization_error}. 
When the weight dimension $d$ is sufficiently large, FedVote converges faster. 
\end{Remark}

\begin{Remark}
The value of the scalar error term $R^{(k)}_{m}$ in \eqref{eq:convergence_bound} depends on the gradient dissimilarity. 
If the angle between the local gradient $\nabla f_{m}(\nw[k,t]_m)$  and the global one $\nabla f(\nw[k,t])$ is not large, 
$R^{(k)}_{m}$ can be treated as a bounded variable.  
\end{Remark}

\begin{Remark}\label{remark:influence_normalization_function}
The choice of nonlinear function $\varphi:\mathbb{R}^{d} \rightarrow(-1,1)^{d}$ will affect the convergence. 
If $\varphi$ behaves more like the $\sign(\cdot)$ function, e.g., when $a$ increases in $\tanh(ax)$, the quantization error will be reduced. 
In other words, we expect a smaller $\sigma_{k}^2$ according to \paperlemma{lemma:sto_rounding_quantization_error}, 
which leads to a tighter bound in (\ref{eq:convergence_bound}).   
Meanwhile,  a larger $\ubound$ will negatively influence the convergence.   
\end{Remark}

\section{Experimental Results}\label{section:experiments}
\highlight{Data and Models.}
We choose image classification datasets, CIFAR-10~\citep{krizhevsky2009learning}, FEMNIST~\cite{caldas2018leaf}, and Fashion-MNIST~\citep{xiao2017fashion}. 
Suppose each dataset has $C$ label categories.
For FEMNIST, the heterogeneity comes from the unique writing styles of clients.
We consider two data partition strategies for CIFAR-10 and Fashion-MNIST: 
(i)~i.i.d. setting where the whole dataset is randomly shuffled and assigned to each worker without overlap; 
(ii)~non-i.i.d. setting where we follow~\cite{hsu2019measuring} and use a Dirichlet distribution to simulate the heterogeneity.  
In particular, for the $m$th worker we draw a random vector $\boldsymbol{q}_m \sim \text{Dir}(\alpha)$,
where $\boldsymbol{q}_m = [q_{m,1}, \cdots, q_{m,C}]^{\top}$ belongs to the standard $(C-1)$-simplex.  
We then assign data samples from different classes to the $m$th worker following the distribution of $\boldsymbol{q}_m$. 
We set $\alpha=0.5$ unless noted otherwise. 
We use a LeNet-5 architecture for Fashion-MNIST and a VGG-7 architecture for CIFAR-10 and FEMNIST. 
Results are obtained over three repetitions.

\highlight{Implementation Details.} 
We provide implementation details in the proposed FedVote design. 
First, following prior works~\citep{shayer2018learning, gong2019differentiable}, 
we keep the weights of the BNN final layer as floating-point values for the sake of the model performance. 
The weights of the final layer are randomly initialized with a shared seed and will be fixed during the training process. 
Second, we notice that for quantized neural networks, the batch normalization (BN)~\citep{ioffe2015batch} after the convolutional layer is necessary to scale the activation. 
We use the static BN without learnable parameters and local statistics~\citep{diao2021heterofl} to ensure the voting aggregation of binary weights. 
For the normalization function, we choose $\varphi(x) = \tanh(3x/2)$ unless noted otherwise.   
More details can be found in Appendix~\ref{appendix:hyperparameters}.

\begin{figure}[!tb]
    \centering
    \includegraphics[width=0.5\textwidth]{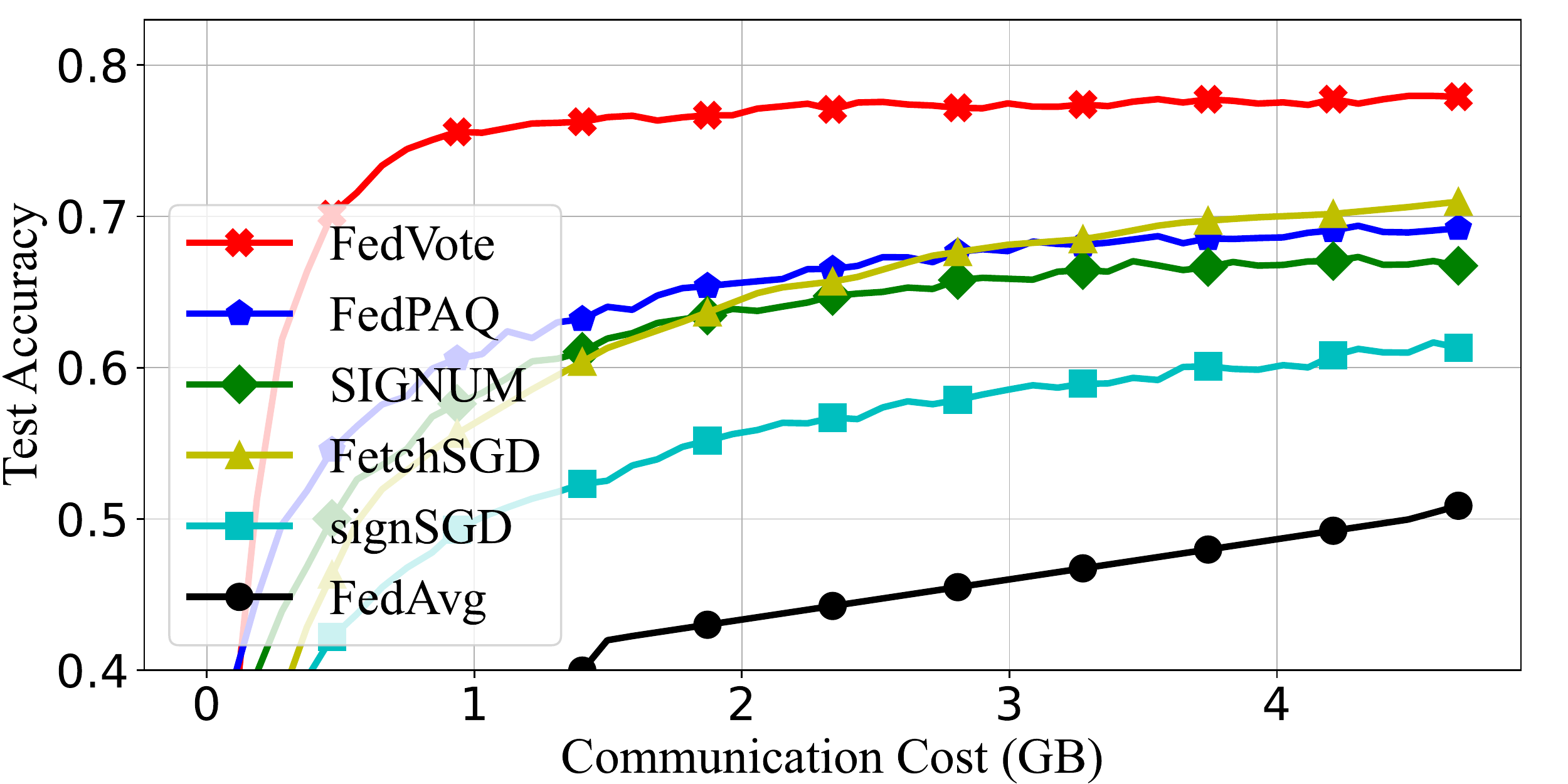}
    \caption{
    Test accuracy versus accumulative uplink communication cost on non-i.i.d. CIFAR-10. 
    FedVote outperforms other methods by achieving the highest test accuracy when the communication cost is fixed.} 
    \label{fig:acc_cost}
\end{figure}

\highlight{Communication Efficiency and Convergence Rate.} 
We simulate two cross-device settings and one cross-silo setting. 
For cross-device settings, we set $300$ clients in the FEMNIST task and $100$ clients in the CIFAR-10 task. 
We sample $20$ of them uniformly at random to simulate partial participation. 
For the cross-silo setting, we split CIFAR-10 training examples among $31$ clients. 
We compare FedVote with several popular gradient compression methods, including FetchSGD~\cite{rothchild2020fetchsgd}, FedPAQ~\cite{reisizadeh2020fedpaq}, signSGD~\cite{bernstein2018signsgd}, and SIGNUM~\cite{bernstein2019signsgd}. 
The results are shown in \paperfig{fig:lr_curves}.
The three plots reveal that FedVote outperforms gradient compression methods by achieving the highest accuracy in different settings. 
We plot test accuracy versus accumulative uplink communication cost on non-i.i.d. CIFAR-10 in \paperfig{fig:acc_cost}. 
FedVote outperforms the gradient quantization methods such as signSGD and SIGNUM that quantize gradients to 1 bit signs, and FedPAQ that quantizes the updates to 2 bits integers. 
It also shows advantage over the count sketch based scheme, FetchSGD, which adopted a special data structure for gradient compression. 
Compared with FedPAQ, signSGD, and FedAvg, FedVote improves the test accuracy by $5$--$10\%$, $15$--$20\%$, and $25$--$30\%$, respectively, given the fixed communication costs of $1.5$--$4.7$ GB.

\begin{figure*}[!tb]
    \centering
    \subcaptionbox{}[0.32\textwidth]{
        \includegraphics[width=\linewidth]{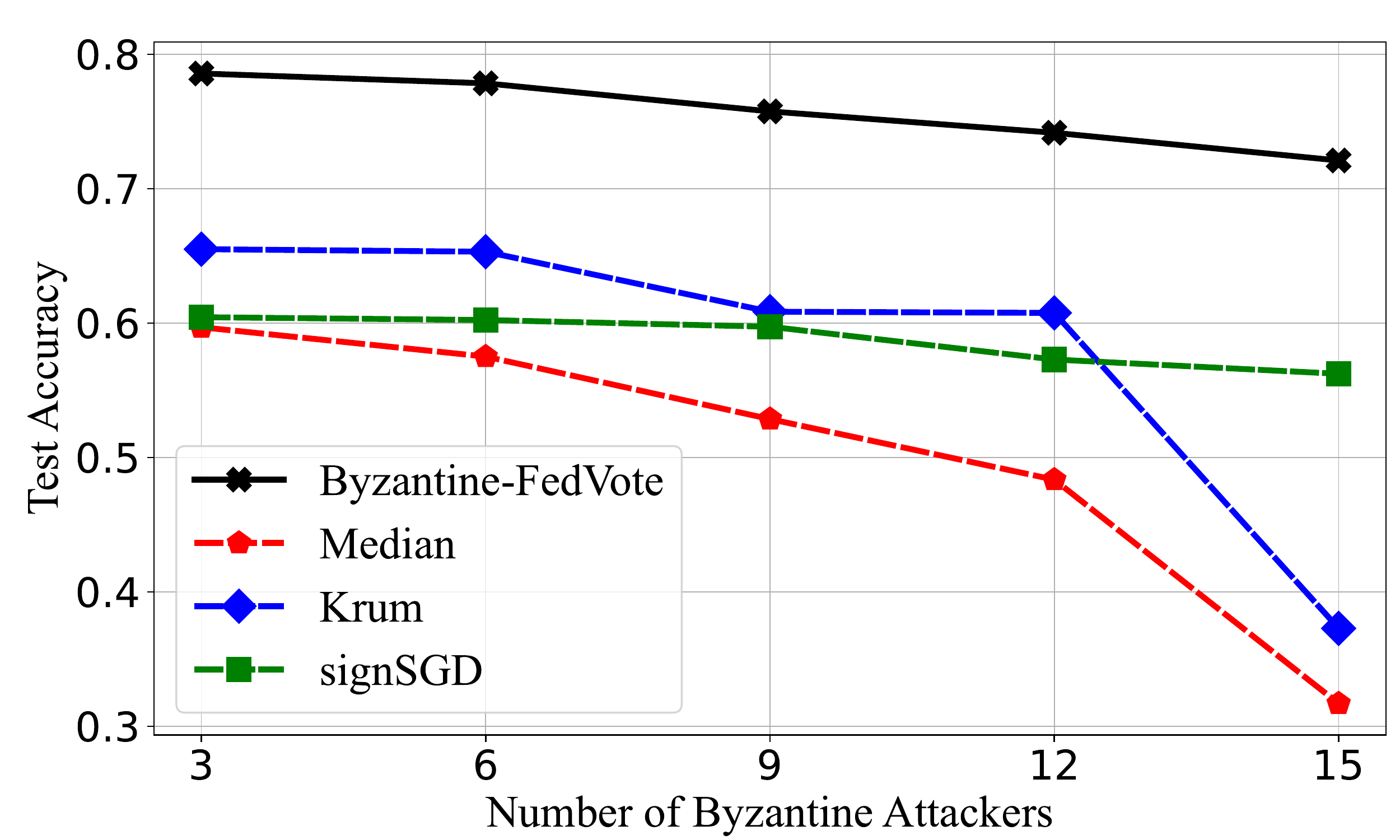}
    }
    \subcaptionbox{}[0.32\textwidth]{
        \includegraphics[width=\linewidth]{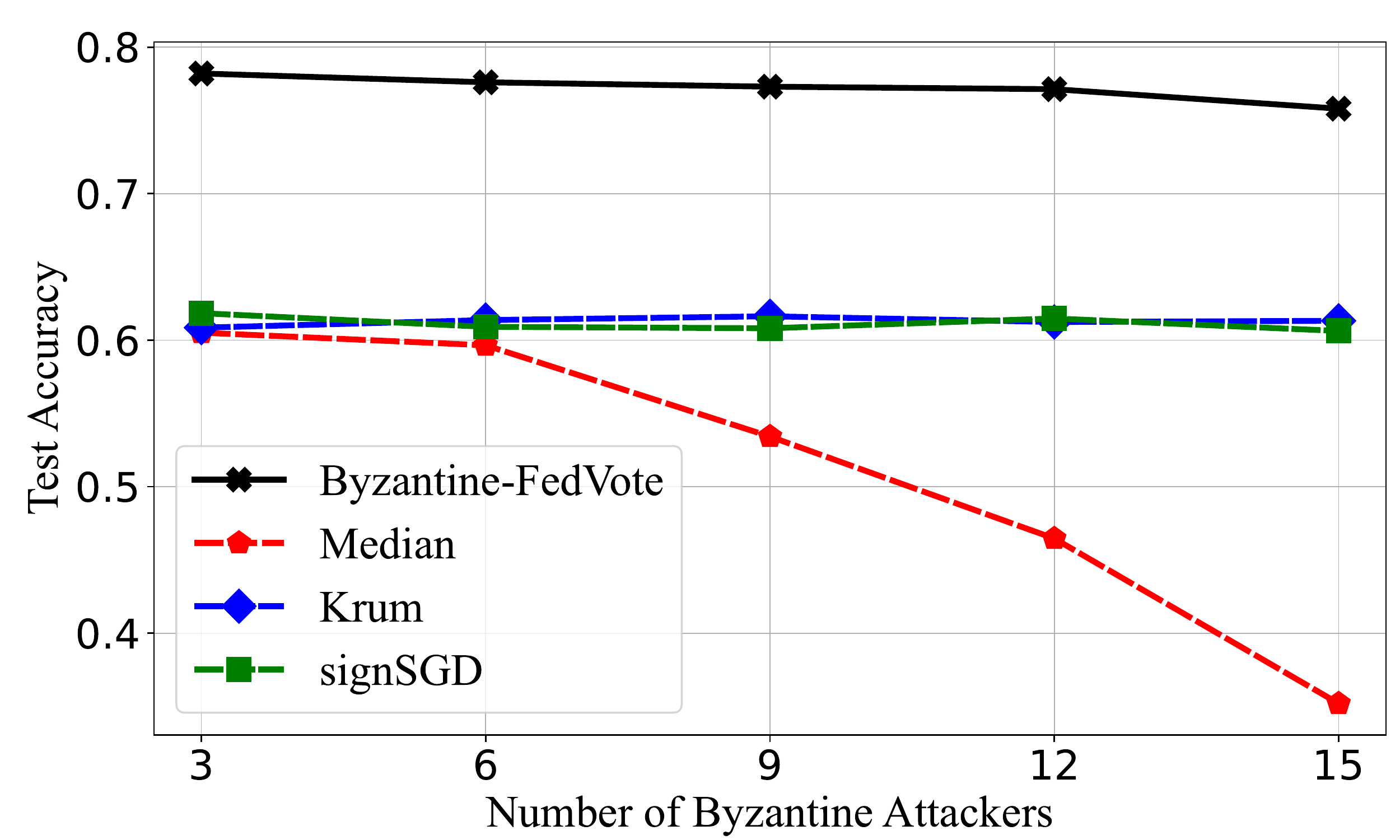}
    }
    \subcaptionbox{}[0.32\textwidth]{
        \includegraphics[width=\linewidth]{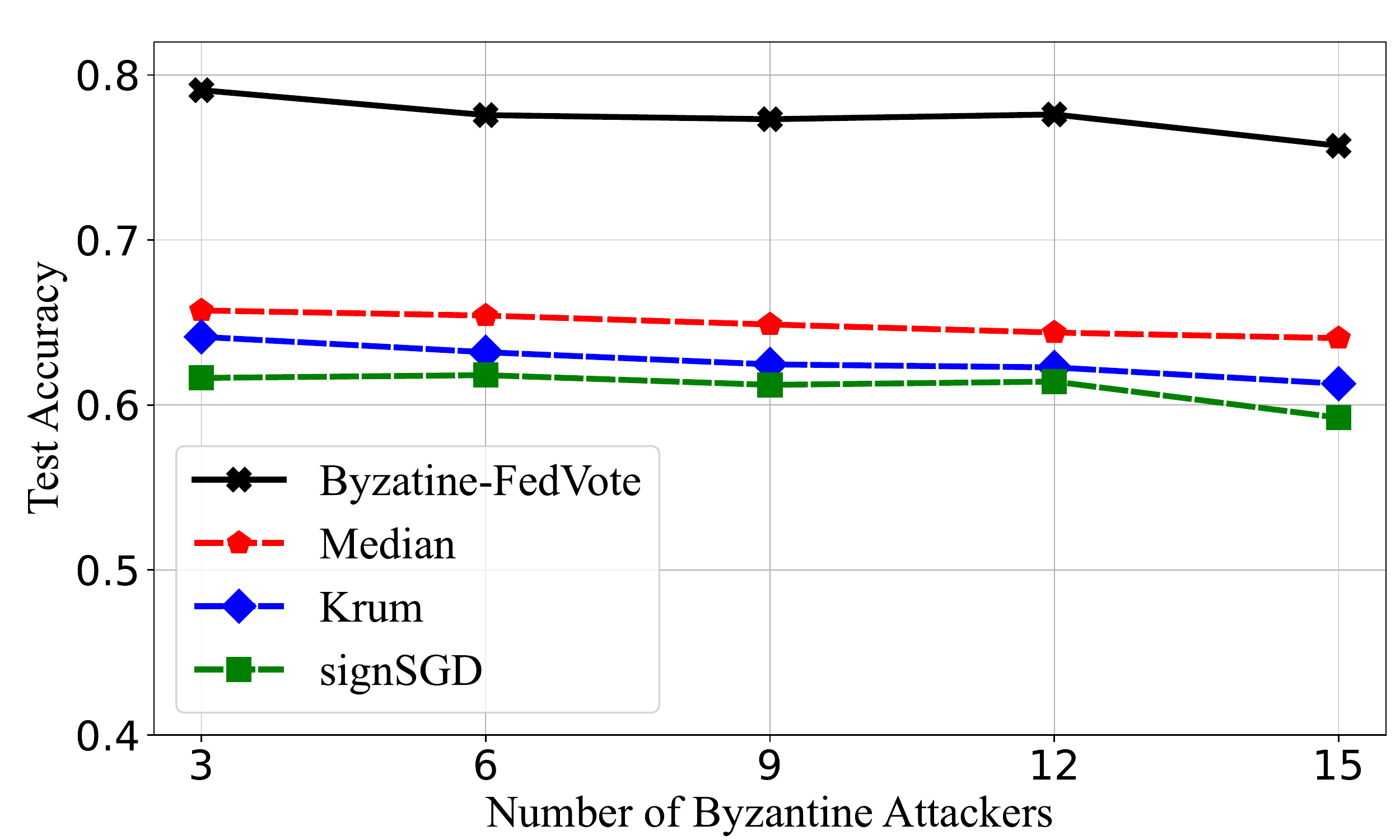}
    }
    \caption{
    Test accuracy versus the number of attackers on non-i.i.d. CIFAR-10 with 
    (a)~inverse sign attack, 
    (b)~data poisoning, and
    (c)~random perturbation.
    Byzantine-FedVote shows resilience under all types of attack, with less than $7\%$ drop in accuracy.
    \label{fig:acc_byz_r1}
    }
\end{figure*}

\highlight{Byzantine Resilience.} 
We consider the following three attack methods:
(i)~inverse sign, where attackers inverse the signs of transmitted weights/gradients;
(ii)~data poisoning~\cite{cao2020fltrust}, where attackers flip the labels of the training data;
(iii)~random perturbation, where attackers send random weights/gradients either from a binary uniform distribution or a Gaussian distribution, sharing the same statistics with normal clients. 
As Byzantine-FedVote needs to maintain the credibility score for each client, we use a cross-silo federated learning setting with full participation. 
The number of attackers is 15, and the remaining 16 clients are normal participants.
We compare the proposed method with coordinate-wise median based gradient descent~\citep{yin2018byzantine}, Krum~\cite{blanchard2017machine}, and signSGD~\citep{bernstein2018signsgd} in \paperfig{fig:acc_byz_r1}. 
Byzantine-FedVote achieves the best resilience among various resilient schemes under different attacks.

\highlight{Normalization Function.} From \paperremark{remark:influence_normalization_function}, we know that the normalization function can influence the model convergence. 
We empirically examine the impact in this experiment.
For normalization function $\varphi (x) = \tanh(a x)$, we choose $a$ from $\{0.5,1.5,2.5,10\}$.
We test the model accuracy after $20$ communication rounds on Fashion-MNIST.   
The results are shown in \papertab{table:effect_normalization_fun}. 
As $a$ increases, the linear region of the normalization function shrinks, and the algorithm converges slower due to a larger $c_2$. 
On the other hand, the gap between the model with normalized weight $\nw$ and the one with binary weight $\w$ also decreases due to smaller quantization errors. 

\begin{table}[tb]
    \centering
    \caption{\sc Effect of the Normalization Function}
    \label{table:effect_normalization_fun}
    \begin{tabular}{p{0.13\linewidth}p{0.1\linewidth}p{0.1\linewidth}p{0.1\linewidth}p{0.1\linewidth}p{0.1\linewidth}}
        \toprule
        \multirow{2}{0.13\linewidth}{Fashion-MNIST} & & \multicolumn{4}{c}{$a$} \\[-1.3pt] \cmidrule{3-6} 
        &  & $0.5$ & $1.5$ & $2.5$ & $10$ \\[-1.5pt] \midrule
        \multirow{2}{0.13\linewidth}{ i.i.d.} 
        &  float    & $90.7\%$ & $90.6\%$ & $90.0\%$ & $88.2\%$ \\
        & binary   & $88.7\%$ & $90.4\%$ & $89.9\%$ & $88.2\%$ \\ \cmidrule{1-6} 
        \multirow{2}{0.13\linewidth}{non-i.i.d.} 
        & float     & $87.3\%$ & $86.9\%$ & $85.7\%$ & $85.0\%$ \\
        & binary  & $83.3\%$ & $85.5\%$ & $85.2\%$ & $84.6\%$ \\ \bottomrule
    \end{tabular}  
\end{table}    

\highlight{Ternary Neural Network Extension.}
In the previous sections, we focus on the BNNs. 
We extend FedVote to ternary neural networks (TNNs) and empirically verify its performance. 
Training and transmitting the categorical distribution parameters of the ternary weight  may bring additional communication and computation cost to edge devices, we therefore simplify the procedure as follows.  
For each ternary weight $w^{(k,t)}_{m,i}$, we still keep a latent parameter $h^{(k,t)}_{m,i}$ to optimize. 
The detail of stochastic rounding can be found in Appendix~\ref{appendix:tnn}.
The training results are shown in \papertab{table:tnn_and_bnn}. 
As TNNs can further reduce the quantization error, their performance is better than the BNNs at the cost of additional $1$ bit/dimension communication overhead.

\begin{table}[tb]
\centering
\caption{\sc Test Accuracy of TNN and BNN \label{table:tnn_and_bnn}}
\begin{tabular}{p{0.19\linewidth}p{0.2\linewidth}p{0.16\linewidth}p{0.16\linewidth}}
    \toprule
    Dataset &  Distribution  & BNN & TNN \\ \midrule
    \multirow{2}{0.07\linewidth}{Fashion-MNIST} 
    &  i.i.d.   & $91.1\%$ & $91.9\%$ \\[2pt]
    & non-i.i.d & $88.3\%$ & $89.4\%$  \\ \cmidrule{1-4} 
    \multirow{2}{0.07\textwidth}{CIFAR-10} 
    & i.i.d.    & $80.5\%$ & $82.5\%$ \\[2pt]
    & non-i.i.d & $74.6\%$ & $77.6\%$ \\ \bottomrule
\end{tabular}
\end{table}
\begin{table}[tb]
    \centering
    \caption{\sc Forward Pass Efficiency}
    \label{table:deployment_efficiency}
    \begin{tabular}{p{0.14\linewidth}p{0.12\linewidth}p{0.14\linewidth}p{0.14\linewidth}p{0.11\linewidth}}
        \toprule
        Neural Net & Weight Type & Adds &  Muls & Energy (mJ) \\
        \midrule
        \multirow{2}*{LeNet-5} & float  & \zoom[0.9]{$1.7 \!\times\! 10^{9}$} & \zoom[0.9]{$1.8\!\times\! 10^{9}$}  & \zoom[0.9]{$8.1$} \\
                            & binary & \zoom[0.9]{$1.7 \!\times\! 10^{9}$} & \zoom[0.9]{$1.0 \!\times\! 10^{5}$}  & \zoom[0.9]{$1.5$} \\ \cmidrule{1-5}
        \multirow{2}*{VGG-7} & float  & \zoom[0.9]{$4.8 \!\times\! 10^{10}$} & \zoom[0.9]{$5.4 \!\times\! 10^{10}$}  & \zoom[0.9]{$242.9$} \\ 
                            & binary & \zoom[0.9]{$4.8 \!\times\! 10^{10}$} & \zoom[0.9]{$2.1\!\times\! 10^{5}$}  & \zoom[0.9]{$43.3$} \\
        \bottomrule
    \end{tabular}    
\end{table}

\highlight{Deployment Efficiency.} 
We highlight the advantages of BNNs during deployment on edge devices.
In FedVote, we intend to deploy lightweight quantized neural networks on the workers after the training procedure. 
BNNs require 32 $\!\times\!$ smaller memory size, which can save storage and energy consumption for memory access~\citep{hubara2016binarized}.  
As we do not quantize the activations, the advantage of BNNs inference mainly lies in the replacement of multiplications by summations.   
Consider the matrix multiplication in a neural network with an input $\x \in \mathbb{R}^{d_1}$ and output $\y \in\mathbb{R}^{d_2}$: 
$
    \y = W^\top \x.
$
For a floating-point weight matrix $W \in \mathbb{R}^{d_1 \!\times\! d_2}$, the number of multiplications is $d_1 d_2$, whereas for a binary matrix $W_{\textrm{b}} \in \mathbb{D}_2^{d_1 \!\times\! d_2}$ all multiplication operations can be replaced by an additions.   
We investigate the number of real multiplications and additions in the forward pass of different models and present the results in \papertab{table:deployment_efficiency}. 
We use the CIFAR-10 dataset and set the batch size to $100$. 
As to the energy consumption calculation, we use $3.7$ pJ and $0.9$ pJ as in~\cite{hubara2016binarized} for each floating-point multiplication and addition, respectively.

\section{Conclusion}\label{section:conclusion}
In this work, we have proposed FedVote to jointly optimize communication overhead, learning reliability, and deployment efficiency. 
In FedVote, the server aggregates neural networks with binary/ternary weights via voting.  
We have verified that FedVote can achieve good model accuracy even in coarse quantization settings. 
Compared with gradient quantization, model quantization is a more effective design that achieves better trade-offs between communication efficiency and model accuracy.
With the voting-based aggregation mechanism, FedVote enjoys the flexibility to incorporate various voting protocols to increase the resilience against Byzantine attacks.
We have demonstrated that Byzantine-FedVote exhibits much better Byzantine resilience in the presence of close to half attackers compared to the existing algorithms.

\setcounter{Lemma}{0}
\setcounter{Theorem}{0}

\appendices
\section{Setup and Additional Experiments}\label{appendix:setup}

\subsection{Hyperparameters}\label{appendix:hyperparameters}
For the clipping thresholds, we set $p_{\text{min}} = 0.001$ and $p_{\text{max}} = 1 - p_{\text{min}}$. 
The thresholds are introduced for numerical stability and have little impact on performance. 
We use $\beta = 0.5$ in Byzantine-FedVote. 
We use the Adam optimizer and search the learning rate $\eta$ over the set $\{10^{-4}, 3\times 10^{-4}, 10^{-3}, 3\times 10^{-3}, 10^{-2}, 3\times 10^{-2}, 10^{-1}, 3\times 10^{-1}\}$. 
We set the number of local iterations $\tau$ to 40 and the local batch size to $100$.
For FetchSGD~\cite{rothchild2020fetchsgd}, we set number of sketch columns to $1\times10^{6}$ and use $k=5\times 10^{4}$ in Top-$k$ method. 
Our implementation is available at \url{https://github.com/KAI-YUE/fedvote}.

\subsection{Comparison of Vanilla FedVote and Byzantine-FedVote}\label{appendix:byzantine_fedvote}
\paperlemma{lemma:FedVote_recovers_FedAvg} shows that FedVote is related to FedAvg in expectation. 
Adversaries sending the opposite results will negatively affect the estimation of the weight distribution and impede the convergence in multiple rounds. 
We compare the test accuracy of Byzantine-FedVote, Vanilla FedVote, and signSGD on the non-i.i.d. CIFAR-10 dataset with various numbers of omniscient attackers sending the opposite aggregation results. 
\paperfig{fig:acc_byz} reveals that the test accuracy of Vanilla FedVote drops severely when the number of adversaries increases, which is consistent with our analysis.   
In contrast, the drop of accuracy in Byzantine-FedVote is negligible. 

\subsection{Extension to Ternary Neural Networks} \label{appendix:tnn}
The stochastic rounding used in the ternary neural networks, $w_i = Q_{\text{sr}}(\widetilde{w})$, is an extension of \eqref{eq:sto_rounding}:
\begin{equation}
    w_i = \left\{
\begin{array}{ll}
   +1, & \textrm { with prob. } \pi_{1} = \tilde{w}_i \, \mathbbm{1}(\widetilde{w}_i > 0),   \\
   -1, & \textrm{ with prob. } \pi_{2} = - \tilde{w}_i \, \mathbbm{1}(\widetilde{w}_i < 0),  \\
   0, & \textrm{ with prob. } 1 - (\pi_{1} + \pi_{2}). 
\end{array}\right.
\end{equation}
One can modify the normalization function to optimize neural networks with multiple quantization levels. 

\begin{figure}
    \centering
    \includegraphics[width=0.5\textwidth]{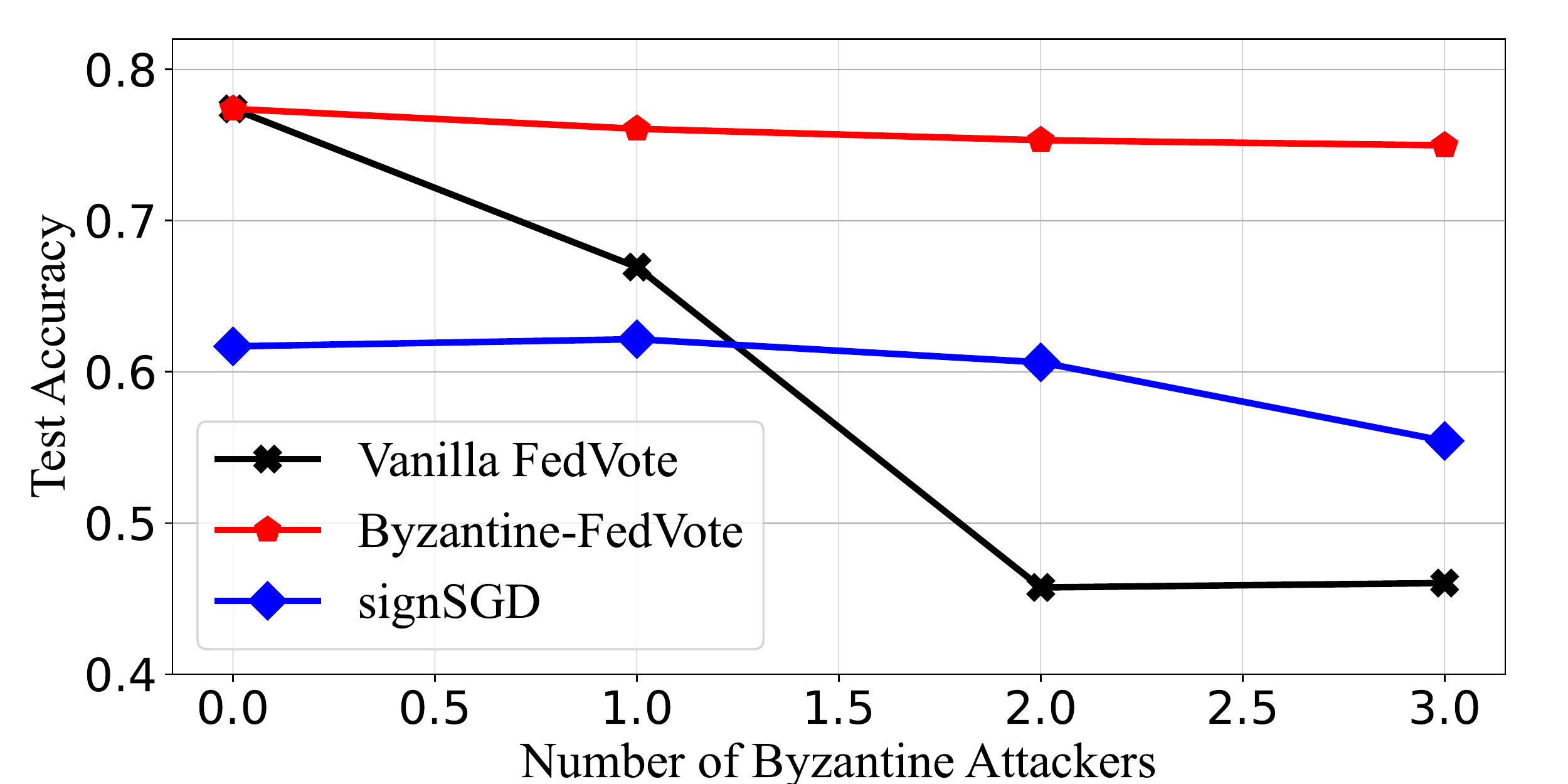}
    \caption{
        Test accuracy versus the number of Byzantine workers. 
            As the number of adversaries increases, the test accuracy of FedVote drops rapidly.
    \label{fig:acc_byz}
    }
\end{figure}

\subsection{Batch Normalization in FedVote}
Below we review the commonly-adopted BN function for convenience of presentation. 
For a one-dimensional input $x^{(j)}$ from the current batch $\mathcal{B}=\{x^{(1)},\cdots, x^{(n_{b})}\}$, the output of BN layer is formulated as 
\begin{equation}
    y \triangleq \text{BN}_{\gamma, b}(x^{(j)}) = 
    \gamma \frac{x^{(j)} - \mu}{\sqrt{\sigma^2 + \epsilon}} + b,
\end{equation}
where $\gamma, b$ are learnable affine transformation parameters, and 
$\mu, \sigma^2$ are the mean and variance calculated over the batch samples. 
Note that the normal BN layer will introduce the real-valued parameters 
and track the statistics of the input, all of which may cause problems when being binarized in FedVote. 
Therefore, we choose to set the parameter-free static BN, i.e.,
\begin{equation}
    y^{\prime} \triangleq \text{BN}(x^{(j)}) = 
    \frac{x - \expect_{\mathcal{B}}[ x^{(j)} ] }{\sqrt{ \text{Var}_{\mathcal{B}}[x^{(j)}] + \epsilon}}.
\end{equation}
\setcounter{Theorem}{0}
\setcounter{Lemma}{0}

\section{Missing Proofs}


\subsection{Proof of \paperlemma{lemma:one_shot_fedvote} \label{proof:one_shot_fedavg}}
\begin{proof}
Let $X_{m,i} \triangleq \ind{w^{(k, \tau)}_{m, i} \neq w_{i}^{*}}$, following Bernoulli distribution with parameter $\varepsilon_{m,i}$.
Let $Y_{i} = \sum_{m=1}^{M} X_{m,i}$, we have 
\begin{equation}
    \prob\left(w^{(k+1)}_{i} \neq w_{i}^{*}\right) 
    = \prob \left(Y_{i} \geqslant \frac{M}{2}\right).
\end{equation}
With independent vote results from workers, $Y_{i}$ follows Poisson binomial distribution 
with mean $\mu_{Y_{i}} = \sum_{m=1}^M \varepsilon_{m,i}$.
$\forall \; a>0$, the Chernoff bound can be derived as
\begin{subequations}
\begin{align}
    & \prob \left(Y_{i} \geqslant \frac{M}{2}\right) 
    = \prob(e^{aY_i}  \geqslant e^{\frac{aM}{2}} ) \\ 
    \qquad \overset{\cirone}{\leqslant} &\; \exp \left(-\frac{aM}{2}\right) \expect \left[e^{aY_i}\right] \\
    \qquad = &\; \exp \left(-\frac{a M}{2}\right) \prod_{m=1}^{M}\left(1-\varepsilon_{m,i}+e^{a} \varepsilon_{m,i}\right) \\
    \qquad = &\; \exp \left(-\frac{a M}{2} + \sum_{m=1}^{M} \ln \left( 1 + \varepsilon_{m,i}(e^a - 1) \right) \right) \\
    \overset{\cirtwo}{\leqslant} &\; \exp \left(-\frac{a M}{2} + \sum_{m=1}^{M} \varepsilon_{m,i}(e^a - 1) \right).
\end{align}
\end{subequations}
where $\cirone$ is based on Markov's inequality. $\cirtwo$ holds due to $\ln(1+x) \leqslant x$ for all $x\in(-1,\infty)$. 

By assumption we have $\mu_{Y_i} < \frac{M}{2}$. Let $a = \ln{\frac{M}{2\mu_{Y_i}}}$, we have
\begin{subequations}
\begin{align}
    \prob \left(Y_{i} \geqslant \frac{M}{2}\right) 
    & \leqslant \frac{\exp \left(-\mu_{Y_{i}}+\frac{M}{2}\right)}{\left(\frac{M}{2 \mu_{Y_i}}\right)^{\frac{M}{2}}} \\
    & \leqslant \left(\frac{2 \mu_{Y_{i}}}{M} \exp \left(1 - \frac{2\mu_{Y_{i}}}{M}\right)\right)^{\frac{M}{2}}. \label{eq:expaneded_one_shot_upperbound}
\end{align}  
\end{subequations}
Let $s_i = \frac{\mu_{Y_{i}}}{M} = \frac{1}{M} \sum_{m=1}^{M} \varepsilon_{m,i}$ and substitute it into (\ref{eq:expaneded_one_shot_upperbound}), 
the proof is complete.
\end{proof}

\subsection{Proof of \paperlemma{lemma:FedVote_recovers_FedAvg} \label{proof:FedVote_recovers_FedAvg}}
\setcounter{Lemma}{2}

\begin{proof}
From the inverse normalization in (\ref{eq:latent_weight_reconstruct}), we have $\nw[k+1] = 2\p[k+1] - 1$. 
Recall the definition of the empirical Bernoulli parameter $\p[k+1]$ given by (\ref{eq:vote_distribution}), we have the elementwise expectation
\begin{subequations}
\begin{align}
    \mathbb{E}_{\srpr} \left[\widetilde{w}^{(k+1)}_{i} \right]
    & = \frac{1}{M} \sum_{m=1}^{M}\left(2 \widehat{\prob}(w^{(k,\tau)}_{m,i} = 1) - 1 \right) \\
    & \overset{\cirone}{=} \frac{1}{M} \sum_{m=1}^{M} \varphi(h^{(k,\tau)}_{m,i}), \label{eq:final_step_fedvote_recovers_fedavg}
\end{align}
\end{subequations}
where $\cirone$ follows from stochastic rounding defined in (\ref{eq:sto_rounding}). 
Based on the definition of range normalization in (\ref{eq:def_latent_weight}), for local normalized weight we have $w^{(k,\tau)}_{m,i} = \varphi(h^{(k,\tau)}_{m,i})$. 
Substituting the result into (\ref{eq:final_step_fedvote_recovers_fedavg}) we have
\begin{equation}
    \mathbb{E}_{\srpr} \left[\nw[k+1] \right]  = \frac{1}{M} \sum_{m=1}^{M} \nw[k,\tau]_{m},
\end{equation}
which completes the proof. 
\end{proof}

\subsection{Proof of \paperlemma{lemma:sto_rounding_quantization_error} \label{proof:sto_rounding_quantization_error}}
    
\begin{proof}
Let $\vecOut \triangleq \srQaunt (\vecIn)$, we have 
\begin{subequations}
\begin{align}
    & \mathbb{E}\left[\normsq{  \srQaunt(\vecIn) -  \vecIn } \big| \vecIn\right] 
     = \mathbb{E}\left[\sum_{i=1}^{d} (\hat{a}_{i} - a_{i})^2 \big| \vecIn\right] \\ 
    \qquad = &\; \sum_{i=1}^{d}\left(\mathbb{E} \left[\hat{a}_{i}^{2} \big| \vecIn\right]-a_{i}^{2}\right)
     =  d-\|\vecIn\|_{2}^{2}.
\end{align}
\end{subequations}
The proof is complete. 
\end{proof}

\subsection{Proof of \paperlemma{lemma:fedpaq_quantization_error} \label{proof:fedpaq_quantization_error}}
\setcounter{Lemma}{4}

\begin{proof}
Consider a QSGD quantizer \citep{alistarh2017qsgd} with $s=1$. 
For detailed results of other quantizers, we refer readers to \cite{basu2020qsparse}. 

In particular, we have 
\begin{equation}
    Q_{\mathrm{qsgd}}\left(x_{i}\right)=\|\x\|_{2} \cdot \operatorname{sgn}\left(x_{i}\right) \cdot \xi_{i}(\x, s),
\end{equation}
where 
\begin{equation}
    \xi_{i}(\x, s)|_{s=1} =\left\{\begin{array}{ll}
    0 & \text { with prob. } 1-\frac{\left|x_{i}\right|}{\|\x\|_{2}}, \\
    1 & \text { with prob. } \frac{\left|x_{i}\right|}{\|\x\|_{2}}.
    \end{array}\right.
\end{equation}
The variance of quantization error is 
\begin{subequations}
\begin{align}
    & \mathbb{E}\left[ \normsq{Q(\x)-\x} | \x\right] 
    =\mathbb{E}\left[\sum_{i=1}^{d}\!\left( \hat{x}_{i}\!-\!x_{i}\right)^{2} \big| \x \right] \\
    \qquad = &\; \sum\limits_{i=1}^{d}\left(\mathbb{E}\left[\hat{x}_{i}^{2} \big| \x \right]- x_{i}^{2}\right) 
    = \frac{\|\x\|_{2}^{2}}{\|\x\|} \sum_{i=1}^{d}\left|x_{i}\right|\!-\!\|\x\|_{2}^{2} \\ 
    \qquad = &\; \|\x\|_{2}\|\x\|_{1}-\|\x\|_{2}^{2}
    \leqslant (\sqrt{d} - 1)\|\x\|_{2}^{2},
\end{align}
\end{subequations}
which completes the proof. 
\end{proof}

\subsection{Proof of \papertheorem{theorem:convergence_iid_setting} \label{proof:convergence_iid_setting}}
We first introduce some notations for simplicity.
Let $\de[k]$ denote the difference between two successive global latent weights, i.e., 
\begin{equation}\label{eq:def_nw_delta}
    \de[k] \triangleq \nw[k] - \nw[k+1].
\end{equation}  
We use $\eps[k,t]_m$ to denote the stochastic gradient noise, i.e., 
\begin{equation}
    \eps[k,t] \triangleq \stograd[k,t] - \grad[k,t]_m. 
\end{equation}
Finally, we let $\nabla f(\nw)$ denote the gradient with respect to $\nw$. 
The following five lemmas are presented to facilitate the proof. 
\begin{Lemma}\label{lemma:global_nw_aggregation}
The global normalized weight $\nw[k]$ can be reconstructed as 
the average of local binary weight, i.e.,
\begin{equation}\label{eq:nw_simple_agg}
\nw[k+1] = \frac{1}{M} \sum_{m=1}^{M} \hw[k,\tau]_{m}.
\end{equation}      
\end{Lemma}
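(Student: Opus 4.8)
The plan is to unwind the definitions that connect the soft-vote probabilities $\p[k+1]$ back to the local binary weights $\hw[k,\tau]_m$, and observe that the construction of $\nw[k+1]$ via the inverse normalization exactly cancels the normalization applied on the client side. First I would recall from \eqref{eq:latent_weight_reconstruct} that $\lw[k+1] = \varphi^{-1}(2\p[k+1] - 1)$, and hence by definition $\nw[k+1] = \varphi(\lw[k+1]) = 2\p[k+1] - 1$ (ignoring the clipping step, or absorbing it into the statement as is done implicitly elsewhere in the paper). So the claim reduces to showing that $2\p[k+1] - 1 = \frac{1}{M}\sum_{m=1}^M \hw[k,\tau]_m$ coordinatewise.

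For the $i$th coordinate, I would use the soft-vote definition \eqref{eq:vote_distribution}, namely $p^{(k+1)}_i = \frac{1}{M}\sum_{m=1}^M \ind{w^{(k,\tau)}_{m,i} = 1}$. The key elementary fact is that for a $\{-1,+1\}$-valued variable, $w^{(k,\tau)}_{m,i} = 2\,\ind{w^{(k,\tau)}_{m,i} = 1} - 1$, since the indicator is $1$ exactly when $w^{(k,\tau)}_{m,i} = +1$ and $0$ when it is $-1$. Substituting this into the sum gives
\begin{equation}
\frac{1}{M}\sum_{m=1}^M \hw[k,\tau]_{m,i} = \frac{1}{M}\sum_{m=1}^M \left(2\,\ind{w^{(k,\tau)}_{m,i}=1} - 1\right) = 2 p^{(k+1)}_i - 1 = \widetilde{w}^{(k+1)}_i,
\end{equation}
which is precisely \eqref{eq:nw_simple_agg}. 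Stacking over all coordinates $i \in [d]$ finishes the argument.

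This lemma is essentially a bookkeeping identity, so there is no real obstacle — the only subtlety is the clipping operation $\operatorname{clip}(\cdot)$ applied to $p^{(k+1)}_i$ for numerical stability, which in principle perturbs the exact equality. I would either state the lemma for the unclipped version (consistent with how Lemma~\ref{lemma:FedVote_recovers_FedAvg} is phrased in expectation) or note that with $p_{\min}, p_{\max}$ chosen close to $0$ and $1$ the perturbation is negligible and does not affect the subsequent convergence analysis. The identity in \eqref{eq:nw_simple_agg} is what lets the later proof treat $\nw[k+1]$ as a simple average of the transmitted binary vectors, so that the stochastic-rounding error $\quanterr[k]_m = \nw[k,\tau]_m - \hw[k,\tau]_m$ enters the telescoping / descent-lemma computation in a clean additive way.
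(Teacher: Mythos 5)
Your proposal is correct and matches the paper's own proof essentially step for step: both start from $\nw[k+1]=\varphi(\lw[k+1])=2\p[k+1]-\mathbf{1}$ via \eqref{eq:latent_weight_reconstruct}, substitute the soft-vote definition \eqref{eq:vote_distribution}, and use the identity $w^{(k,\tau)}_{m,i}=2\,\ind{w^{(k,\tau)}_{m,i}=1}-1$ for $\{-1,+1\}$-valued weights. Your remark on the clipping step is a reasonable side note; the paper simply ignores clipping in this lemma, just as you propose.
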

\begin{proof}
See \paperappendix{proof:global_nw_aggregation}.
\end{proof}

\begin{Lemma}\label{lemma:lipschitz_continuity}
(Lipschitz continuity) Under \paperassumption{assumption:normalization_function}, 
$\forall\; x_1, \, x_2\in \mathbb{R}$, we have
\begin{equation}\label{lipschitz_continuity}
    |\varphi(x_1) - \varphi(x_2)| \leqslant \ubound |x_1 - x_2|. 
\end{equation} 
\end{Lemma}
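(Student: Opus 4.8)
The plan is to derive the Lipschitz bound directly from \paperassumption{assumption:normalization_function}, which already guarantees that $\varphi$ is strictly increasing and differentiable with $\varphi'(h) \in [\lbound, \ubound]$ on the relevant range. The cleanest route is the mean value theorem: for any $x_1, x_2 \in \mathbb{R}$ with $x_1 \neq x_2$, there exists a point $\xi$ strictly between $x_1$ and $x_2$ such that $\varphi(x_1) - \varphi(x_2) = \varphi'(\xi)(x_1 - x_2)$. Taking absolute values and invoking the upper bound $\varphi'(\xi) \leqslant \ubound$ immediately yields $|\varphi(x_1) - \varphi(x_2)| \leqslant \ubound|x_1 - x_2|$, and the case $x_1 = x_2$ is trivial.

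Alternatively, and perhaps more in the spirit of how the rest of the paper handles these estimates, I would write $\varphi(x_1) - \varphi(x_2) = \int_{x_2}^{x_1} \varphi'(t)\,\diff t$ by the fundamental theorem of calculus, then bound $\left| \int_{x_2}^{x_1} \varphi'(t)\,\diff t \right| \leqslant \int_{\min(x_1,x_2)}^{\max(x_1,x_2)} |\varphi'(t)|\,\diff t \leqslant \ubound\, |x_1 - x_2|$, using $|\varphi'(t)| = \varphi'(t) \leqslant \ubound$ (the derivative is positive since $\varphi$ is strictly increasing). Either argument is a two-line computation.

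The only point requiring a word of care is the scope of the derivative bound: \paperassumption{assumption:normalization_function} states the bound $\varphi'(h^{(k,t)}_{m,i}) \in [\lbound, \ubound]$ for the specific latent-weight iterates, but the discussion immediately following the assumption (the $\tanh(ah)$ example, saturation, and the clipping of $p_i$ giving an upper bound $h_{\textrm B}$) makes clear the intended reading is that $\varphi' \leqslant \ubound$ holds on all of $\mathbb{R}$, with $\ubound = \sup_{h\in\mathbb{R}}\varphi'(h)$ (e.g. $\ubound = a$ for $\varphi(h)=\tanh(ah)$). Under that reading there is no obstacle at all; I would simply state that $\varphi' \leqslant \ubound$ on $\mathbb{R}$ by \paperassumption{assumption:normalization_function} and proceed. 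If one wished to be pedantic and only use the bound on a bounded interval containing the iterates, the inequality would still hold for $x_1, x_2$ in that interval, which is all the convergence proof ever applies it to — but I expect the paper to take the global reading, so there is essentially nothing hard here; this lemma is a bookkeeping step isolating a fact used repeatedly in the proof of \papertheorem{theorem:convergence_iid_setting}.
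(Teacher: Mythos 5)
Your proposal is correct and matches the paper's own argument: the paper also proves this via the mean value theorem, writing $\varphi(x_2)-\varphi(x_1)=\varphi'(c)(x_2-x_1)$ for some $c$ between the points and bounding $\varphi'(c)\leqslant \ubound$ using \paperassumption{assumption:normalization_function}. Your remark about reading the derivative bound as holding globally is also consistent with how the paper applies the assumption.
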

\begin{proof}
Without loss of generality, suppose $x_1<x_2$. 
From the mean value theorem, there exists some $ c \in (x_1, x_2)$ such that 
\begin{equation}
    \varphi^{\prime}(c) = \frac{\varphi(x_2) - \varphi(x_1)}{x_2 - x_1}.
\end{equation}
For the monotonically increasing function $\varphi$, we have 
\begin{subequations}
\begin{align}
    \varphi\left(x_{2}\right)-\varphi\left(x_{1}\right) 
    &= \varphi^{\prime}(c)\left(x_{2}-x_{1}\right) \\
    & \overset{\cirone}{\leqslant} \ubound \left(x_{2}-x_{1}\right),
\end{align}
where $\cirone$ holds due to \paperassumption{assumption:normalization_function}. 
The similar result can be obtained by assuming $x_2<x_1$, which completes the proof.
\end{subequations}
\end{proof}

\begin{Lemma}\label{lemma:bound_w_divergence}
    (Bounded weight divergence) Under \paperassumption{assumption:sto_grad}, we have 
    \begin{equation}
        \expect \normsq{\nw[k,\tau]_{m}\!-\!\nw[k]}
        \leqslant 
        (\ubound \eta)^2 \tau
        \left( \sumtau \expect \normsq{\grad[k,t]_m} 
        + \sigma_{\varepsilon}^{2} \right). 
    \end{equation}
\end{Lemma}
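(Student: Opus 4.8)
The plan is to expand the difference $\nw[k,\tau]_m - \nw[k]$ as a telescoping sum of the local normalized-weight increments over the $\tau$ local steps, pass to the latent-weight updates where the gradient-descent recursion \eqref{eq:lw_update_rule} lives, and then apply the Lipschitz bound from \paperlemma{lemma:lipschitz_continuity} together with Jensen's inequality and the stochastic-gradient variance bound of \paperassumption{assumption:sto_grad}.

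First I would write, coordinate-wise or in vector form,
\begin{equation}
\nw[k,\tau]_m - \nw[k] = \varphi(\lw[k,\tau]_m) - \varphi(\lw[k,0]_m),
\end{equation}
using the initialization $\lw[k,0]_m = \lw[k]$ and $\nw[k] = \varphi(\lw[k])$. By \paperlemma{lemma:lipschitz_continuity} applied entrywise (and summing squares over coordinates), this gives
\begin{equation}
\normsq{\nw[k,\tau]_m - \nw[k]} \leqslant \ubound^2 \normsq{\lw[k,\tau]_m - \lw[k]}.
\end{equation}
Next I would unroll the latent-weight recursion: $\lw[k,\tau]_m - \lw[k] = -\eta \sumtau \stograd[k,t]$, so
\begin{equation}
\normsq{\lw[k,\tau]_m - \lw[k]} = \eta^2 \normSQ{\sumtau \stograd[k,t]}.
\end{equation}
Apply the Cauchy--Schwarz / power-mean inequality $\normSQ{\sum_{t=0}^{\tau-1} v_t} \leqslant \tau \sumtau \normsq{v_t}$ to pull the sum outside, then decompose $\stograd[k,t] = \grad[k,t]_m + \eps[k,t]_m$. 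Taking expectations, the cross term vanishes because $\expectXi{\eps[k,t]_m} = \boldsymbol{0}$, and \paperassumption{assumption:sto_grad} bounds $\expect\normsq{\eps[k,t]_m} \leqslant \sigma_\varepsilon^2$, leaving
\begin{equation}
\expect\normsq{\lw[k,\tau]_m - \lw[k]} \leqslant \eta^2 \tau \left( \sumtau \expect\normsq{\grad[k,t]_m} + \tau\sigma_\varepsilon^2 \right).
\end{equation}
Combining with the $\ubound^2$ factor yields the claimed bound (after absorbing the $\tau\sigma_\varepsilon^2$ into the stated form, since the lemma writes $\sigma_\varepsilon^2$ inside the parenthesis multiplied by the outer $\tau$ — I would double-check this bookkeeping against the exact constant in the statement).

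The only mildly delicate point is the independence/conditioning structure when killing the cross terms: the noise $\eps[k,t]_m$ at step $t$ must be mean-zero conditioned on the iterate $\lw[k,t]_m$, and when expanding $\normSQ{\sum_t \stograd[k,t]}$ before applying power-mean one could instead keep the sum and argue a martingale-difference cancellation for the off-diagonal terms — but routing through the cruder $\tau\sum_t$ bound avoids this entirely, so I expect no real obstacle. The main thing to be careful about is simply matching the final constants to the exact form written in the lemma statement.
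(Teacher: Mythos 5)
Your skeleton is the same as the paper's: reduce $\normsq{\nw[k,\tau]_{m}-\nw[k]}$ to the latent-weight difference via the entrywise Lipschitz bound of \paperlemma{lemma:lipschitz_continuity}, unroll $\lw[k,\tau]_{m}-\lw[k]=-\eta\sumtau\stograd[k,t]$, and split $\stograd[k,t]=\grad[k,t]_m+\eps[k,t]_m$ with the cross terms killed by conditional mean-zero noise and \paperassumption{assumption:sto_grad}. The gap is in the order of operations on the noise term, and it is exactly the point you waved off. If you apply $\normSQ{\sum_t \boldsymbol{v}_t}\leqslant \tau\sumtau\normsq{\boldsymbol{v}_t}$ to the \emph{whole} stochastic-gradient sum and only then decompose, the noise contributes $\tau\cdot\sumtau\expect\normsq{\eps[k,t]_m}\leqslant\tau^2\sigma_{\varepsilon}^2$, so you end with
\begin{equation}
\expect \normsq{\nw[k,\tau]_{m}-\nw[k]} \leqslant (\ubound\eta)^2\,\tau\Big(\sumtau\expect\normsq{\grad[k,t]_m}+\tau\sigma_{\varepsilon}^2\Big),
\end{equation}
which is weaker than the lemma by a factor of $\tau$ on the variance term; no "bookkeeping" absorbs this, since the lemma has a single $\sigma_{\varepsilon}^2$ inside the parenthesis multiplied by one $\tau$ outside.

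To get the stated constant you must do what the paper does (and what you labeled as avoidable): decompose \emph{first}, so that $\expect\normSQ{\sumtau\stograd[k,t]}=\expect\normSQ{\sumtau\grad[k,t]_m}+\expect\normSQ{\sumtau\eps[k,t]_m}$, apply the $\tau\sum$ (Cauchy--Schwarz/Jensen) inequality only to the true-gradient sum, and use the martingale-difference structure of the noise so that $\expect\normSQ{\sumtau\eps[k,t]_m}=\sumtau\expect\normsq{\eps[k,t]_m}\leqslant\tau\sigma_{\varepsilon}^2$ (off-diagonal terms vanish because $\eps[k,t]_m$ is mean-zero conditioned on the past, which contains $\eps[k,s]_m$ for $s<t$). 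Combining gives $(\ubound\eta)^2\big(\tau\sumtau\expect\normsq{\grad[k,t]_m}+\tau\sigma_{\varepsilon}^2\big)$, i.e., the claimed bound. This is a small, fixable slip — you even named the needed argument — but as written your route proves only the looser inequality, and the tighter constant matters downstream where this lemma feeds the $\tau(\tau-1)$ terms in \paperlemma{lemma:bound_grad_divergence} and the learning-rate condition in \papertheorem{theorem:convergence_iid_setting}.
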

\begin{proof}
    See \paperappendix{proof:bound_w_divergence}.
\end{proof}

\begin{Lemma}\label{lemma:bound_grad_divergence}
Under Assumptions \ref{assumption:Lsmooth} to \ref{assumption:quant}, we have
\begin{align}
    & \expect \innprod{\nabla f(\nw[k])}{ \de^{(t)} } 
    \geqslant 
    \frac{\lbound^2 \eta \tau}{2} \expect \normsq{\nabla f(\nw[k])}  \\
    &\quad + \frac{\lbound^2\eta}{4M} \left( 2 - (\ubound L)^2\eta^2\tau(\tau-1) \right)\sumAll \expect  \normsq{\grad[k,t]_m}  \nonumber \\
    &\quad - \frac{(\lbound \ubound L)^2\eta^3 \tau(\tau-1)}{4} \sigma_{\varepsilon}^2 
    - \frac{\eta (\ubound^2 - \lbound^2)}{M} \sum_{m=1}^M R^{(k)}_{m}, \label{eq:bounded_innerproduct}
\end{align}
where $R^{(k)}_{m}\triangleq - \sum\limits_{t=0}^{\tau-1}\!\sum\limits_{i\notin\posindex} \expect\left[ (\nabla f(\nw[k]))_i  (\nabla f(\nw[k,t]_{m}))_i \right]$ and $\posindex  \triangleq \left\{i \in [d] \big| \grad[k]_{i} \grad[k, t]_{m,i} \geqslant 0 \right\}$. 

\end{Lemma}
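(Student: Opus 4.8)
The plan is to expand the inner product $\expect\innprod{\nabla f(\nw[k])}{\de^{(t)}}$ by first rewriting $\de^{(t)} = \nw[k] - \nw[k+1]$ using \paperlemma{lemma:global_nw_aggregation}, which gives $\de^{(t)} = \nw[k] - \frac{1}{M}\sum_m \hw[k,\tau]_m$. Taking expectation over the stochastic rounding first (which is unbiased, $\expect_\srpr[\hw[k,\tau]_m] = \nw[k,\tau]_m$), the quantization noise drops out of the inner product, leaving $\expect\innprod{\nabla f(\nw[k])}{\nw[k] - \frac{1}{M}\sum_m \nw[k,\tau]_m}$. Next I would relate the normalized-weight increment $\nw[k] - \nw[k,\tau]_m$ back to the latent-weight increment $\lw[k] - \lw[k,\tau]_m = \eta\sum_{t=0}^{\tau-1}\stograd[k,t]$ via the mean value theorem: coordinate-wise, $\widetilde w^{(k)}_{i} - \widetilde w^{(k,\tau)}_{m,i} = \varphi'(\xi_i)\,(h^{(k)}_i - h^{(k,\tau)}_{m,i})$ for some intermediate point, and by \paperassumption{assumption:normalization_function} the factor $\varphi'(\xi_i)$ lies in $[\lbound,\ubound]$. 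This is the step where the gap between $\lbound$ and $\ubound$ enters and is the source of the $(\ubound^2-\lbound^2)$ term and the $R^{(k)}_m$ correction.

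Concretely, I would write $\widetilde w^{(k)}_i - \widetilde w^{(k,\tau)}_{m,i} = \lbound\,\eta\sum_t g^{(k,t)}_{m,i} + (\varphi'(\xi_i)-\lbound)\,\eta\sum_t g^{(k,t)}_{m,i}$ (after replacing the stochastic gradient by its mean, absorbing $\eps[k,t]_m$ separately). Multiplying by $(\nabla f(\nw[k]))_i$ and summing over $i$: on the coordinates $i\in\posindex$ where $g^{(k)}_i$ and $g^{(k,t)}_{m,i}$ have the same sign the "excess" factor $\varphi'(\xi_i)-\lbound \ge 0$ contributes a nonnegative amount that I can simply discard (lower bound by dropping it); on the coordinates $i\notin\posindex$ the same excess factor, bounded by $\ubound-\lbound \le (\ubound^2-\lbound^2)/\lbound$ or handled directly, produces exactly the negative term $-\frac{\eta(\ubound^2-\lbound^2)}{M}\sum_m R^{(k)}_m$. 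Then for the clean $\lbound\,\eta\sum_t g^{(k,t)}_{m,i}$ part, summing the inner product over $t$ and $m$ I get $\frac{\lbound\eta}{M}\expect\innprod{\nabla f(\nw[k])}{\sum_{m,t}\grad[k,t]_m}$, and I apply the smoothness-plus-$2ab = \|a\|^2+\|b\|^2-\|a-b\|^2$ polarization trick: $\innprod{\grad[k]}{\grad[k,t]_m}=\tfrac12\|\grad[k]\|^2+\tfrac12\|\grad[k,t]_m\|^2-\tfrac12\|\grad[k]-\grad[k,t]_m\|^2$, where the last term is bounded by $L^2\|\nw[k]-\nw[k,t]_m\|^2$ via \paperassumption{assumption:Lsmooth} and then by \paperlemma{lemma:bound_w_divergence} (with $t$ in place of $\tau$, i.e. summing $t'<t$), which is where the $(\ubound L)^2\eta^2\tau(\tau-1)$ factor and the $(\lbound\ubound L)^2\eta^3\tau(\tau-1)\sigma_\varepsilon^2$ term come from. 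Collecting the $\tfrac12\|\grad[k]\|^2$ terms gives the leading $\frac{\lbound^2\eta\tau}{2}\expect\normsq{\nabla f(\nw[k])}$.

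The main obstacle I anticipate is the careful bookkeeping of the mean-value-theorem factors across the coordinate partition $\posindex$ versus its complement — making sure the sign conventions line up so that the "good" coordinates genuinely give a nonnegative discardable term while the "bad" coordinates assemble into precisely $R^{(k)}_m$ with the stated sign, and simultaneously tracking which terms keep $\lbound$, which pick up $\ubound$, and which pick up the product $\lbound\ubound$. The telescoping of the weight-divergence bound over intermediate iteration indices $t$ (so that the coefficient is $\tau(\tau-1)$ rather than $\tau^2$) is the other place where an off-by-one slip is easy; I would handle it by summing \paperlemma{lemma:bound_w_divergence} over $t=0,\dots,\tau-1$ and using $\sum_{t=0}^{\tau-1} t = \tau(\tau-1)/2$.
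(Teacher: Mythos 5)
Your skeleton matches the paper's proof at a high level (drop the quantization noise by unbiasedness, mean value theorem for $\varphi$, sign-split over $\posindex$, polarization plus $L$-smoothness, then \paperlemma{lemma:bound_w_divergence} summed over $t$ to get the $\tau(\tau-1)$ factor), but there is a genuine gap in the middle step, and it is exactly the bookkeeping you flagged as the anticipated obstacle. You keep only \emph{one} derivative factor $\varphi'(\xi_i)\in[\lbound,\ubound]$ (the MVT factor) and pair $\nabla f(\nw[k])$ with the \emph{latent} gradients $\grad[k,t]_m$. That leaves a mixed inner product between a gradient taken with respect to $\nw$ and a gradient taken with respect to $\lw$; your "clean part" then silently switches to $\innerprod{\grad[k]}{\grad[k,t]_m}$, but $\grad[k]=D^{(k)}\nabla f(\nw[k])\neq\nabla f(\nw[k])$, and \paperassumption{assumption:Lsmooth} is stated for the normalized weights, so $\normsq{\grad[k]-\grad[k,t]_m}$ is \emph{not} directly bounded by $L^2\normsq{\nw[k]-\nw[k,t]_m}$ (the diagonal factors $\varphi'(h)$ vary between the two points and no smoothness of $\varphi'$ is assumed). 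Moreover, a one-factor split produces constants $\lbound$ and $\ubound-\lbound$ (or $\lbound^3$ once you convert $\normsq{\grad[k]}$ back to $\normsq{\nabla f(\nw[k])}$), and an error term written in latent gradients -- neither matches the stated $\lbound^2$, $\ubound^2-\lbound^2$, nor the definition of $R^{(k)}_m$ in terms of $(\nabla f(\nw[k]))_i(\nabla f(\nw[k,t]_m))_i$.

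The missing idea, which is how the paper makes the accounting come out, is to apply the chain rule \emph{before} the sign split: write $\grad[k,t]_m=D^{(k,t)}_m\nabla f_m(\nw[k,t]_m)$ with $D^{(k,t)}_m=\diag{\varphi'(h^{(k,t)}_{m,1}),\dots,\varphi'(h^{(k,t)}_{m,d})}$, so that the MVT matrix $C^{(k)}_m$ and $D^{(k,t)}_m$ combine into a diagonal $B^{(k,t)}_m=C^{(k)}_mD^{(k,t)}_m$ whose entries lie in $[\lbound^2,\ubound^2]$, and the inner product becomes $\innerprod{\nabla f(\nw[k])}{B^{(k,t)}_m\nabla f_m(\nw[k,t]_m)}$, entirely between normalized-space gradients. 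Then bounding the diagonal entries below by $\lbound^2$ on $\posindex$ and by $\ubound^2$ on its complement yields exactly the $\lbound^2\innerprod{\nabla f(\nw[k])}{\nabla f_m(\nw[k,t]_m)}$ main term plus the $(\ubound^2-\lbound^2)R^{(k)}_m$ correction, after which your polarization/smoothness/weight-divergence steps go through as planned (note also that matching the final coefficient $\frac{\lbound^2\eta}{4M}\bigl(2-(\ubound L)^2\eta^2\tau(\tau-1)\bigr)$ requires one more chain-rule conversion between $\normsq{\nabla f(\nw[k,t]_m)}$ and $\normsq{\grad[k,t]_m}$, which your sketch does not address). Since $\varphi$ is strictly increasing, the sign sets defined with latent or normalized gradients coincide, so the $\posindex$ you use is consistent with the statement once this two-factor step is in place.
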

\begin{proof}
    See \paperappendix{proof:bound_grad_divergence}.
\end{proof}

\begin{Lemma}\label{lemma:bound_globalw_diff}
    Under Assumptions \ref{assumption:Lsmooth} to \ref{assumption:quant}, we have
    \begin{align}
        & \expect  \normsq{\de^{(k)}} 
        \leqslant
        \frac{(\ubound\eta)^2 \tau}{M} \sumAll \expect \normsq{\grad[k,t]_m} \nonumber \\
        & \quad + \frac{(\ubound\eta)^2\tau}{M} \sigma_{\varepsilon}^2 + \frac{1}{M}\sigma_{k}^2 . 
    \end{align}
\end{Lemma}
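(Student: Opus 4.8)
The plan is to express $\de[k]$ as an empirical average of per-client contributions and then separate the error due to client drift from the error due to stochastic rounding. By \paperlemma{lemma:global_nw_aggregation} we have $\nw[k+1] = \frac{1}{M}\sum_{m=1}^{M}\hw[k,\tau]_m$, so writing $\quanterr[k]_m = \nw[k,\tau]_m - \hw[k,\tau]_m$ yields
\[
\de[k] \;=\; \frac{1}{M}\sum_{m=1}^{M}\bigl(\nw[k] - \nw[k,\tau]_m\bigr) \;+\; \frac{1}{M}\sum_{m=1}^{M}\quanterr[k]_m .
\]
When taking $\expect\normsq{\de[k]}$, the cross term between the two sums vanishes: conditioned on the normalized local weights $\{\nw[k,\tau]_m\}_{m=1}^{M}$ the first sum is deterministic, while $\expect[\quanterr[k]_m \mid \nw[k,\tau]_m] = \boldsymbol{0}$ by the unbiasedness of stochastic rounding. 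Hence $\expect\normsq{\de[k]}$ equals the sum of the two individual second moments, and I would bound each in turn.

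For the rounding term, the errors $\quanterr[k]_1,\dots,\quanterr[k]_M$ are conditionally independent across clients (each client draws its own rounding randomness) and conditionally zero-mean, so again all cross terms drop and $\expect\normSQ{\frac{1}{M}\sum_{m=1}^{M}\quanterr[k]_m} = \frac{1}{M^2}\sum_{m=1}^{M}\expect\normsq{\quanterr[k]_m} \leqslant \frac{1}{M}\sigma_{k}^2$ by \paperassumption{assumption:quant}; this produces the last term of the claim.

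For the client-drift term, I would apply convexity of $\|\cdot\|_2^2$ over the $M$ summands to get $\expect\normSQ{\frac{1}{M}\sum_{m}(\nw[k]-\nw[k,\tau]_m)} \leqslant \frac{1}{M}\sum_{m}\expect\normsq{\nw[k]-\nw[k,\tau]_m}$, and then invoke the bounded-weight-divergence estimate \paperlemma{lemma:bound_w_divergence} (which itself rests on the Lipschitz bound of \paperlemma{lemma:lipschitz_continuity} and \paperassumption{assumption:sto_grad}) on each summand. Summing the per-client bounds produces the term $\frac{(\ubound\eta)^2\tau}{M}\sumAll\expect\normsq{\grad[k,t]_m}$ together with a gradient-noise contribution, and collecting the three pieces gives the stated bound.

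The routine portions are the Jensen step and the substitution of \paperlemma{lemma:bound_w_divergence}. The step that requires care is obtaining the gradient-noise contribution with the $1/M$ averaging factor rather than a crude sum of the $M$ per-client variances: this is where the i.i.d.\ data assumption must be tracked, since the mini-batch noise driving the $M$ local trajectories is independent across clients, and one has to isolate that per-step noise inside each trajectory (rather than absorb it wholesale through the per-client bound) so that the cross-client contributions cancel after forming the client-average. I expect this bookkeeping to be the main obstacle; once it is handled, the result follows by combining the earlier lemmas with Assumptions~\ref{assumption:Lsmooth} to \ref{assumption:quant}.
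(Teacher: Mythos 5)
Your proposal follows the paper's proof essentially step for step: the same decomposition $\de[k]=\frac{1}{M}\sum_{m}(\nw[k]-\nw[k,\tau]_{m})+\frac{1}{M}\sum_{m}\quanterr[k]_{m}$, the cross term killed by the conditional unbiasedness of stochastic rounding, the quantization average bounded by $\frac{1}{M}\sigma_k^2$, and the drift term handled by Jensen over the $M$ clients, the Lipschitz bound of \paperlemma{lemma:lipschitz_continuity}, and the mean/noise split of $\stograd[k,t]$ with \paperassumption{assumption:sto_grad}. On the quantization term you are in fact more explicit than the paper, which simply asserts the $\frac{1}{M}\sigma_k^2$ bound; the $\frac{1}{M^2}$ indeed relies on the cross-client conditional independence and zero mean that you spell out.

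The one step you defer --- securing the $\frac{1}{M}$ in front of $\sigma_\varepsilon^2$ --- is not bookkeeping that the paper actually carries out. Its proof does exactly what you describe as the naive route: Jensen across clients first, i.e. $\expect\normSQ{\frac{1}{M}\sum_{m}(\varphi(\lw[k])-\varphi(\lw[k,\tau]_{m}))}\leqslant\frac{\ubound^2}{M}\sum_{m}\expect\normsq{\lw[k]-\lw[k,\tau]_{m}}$, and only then splits $\stograd[k,t]=\grad[k,t]_m+\eps[k,t]_{m}$ inside each client. Followed literally, this yields a noise contribution of $(\ubound\eta)^2\tau\sigma_\varepsilon^2$ (each of the $M$ per-client variances carries the prefactor $\frac{(\ubound\eta)^2\tau}{M}$ and is then summed over $m$); the paper nonetheless writes $\frac{(\ubound\eta)^2\tau}{M}\sigma_\varepsilon^2$. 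Your diagnosis of why the $\frac{1}{M}$ is delicate is correct: once Jensen has been applied over clients the cross-client cancellation is gone, and extracting the noise before squaring the client average is obstructed by the nonlinearity of $\varphi$ --- by the mean value theorem $\varphi(\lw[k])-\varphi(\lw[k,\tau]_{m})=C_m^{(k)}(\lw[k]-\lw[k,\tau]_{m})$ with a trajectory-dependent diagonal matrix $C_m^{(k)}$, so $C_m^{(k)}\eps[k,t]_{m}$ is not exactly zero-mean. So either finish your plan as written and accept the noise term without the $\frac{1}{M}$ (which is all the paper's own argument establishes, and which only weakens the corresponding $\sigma_\varepsilon^2$ term in \papertheorem{theorem:convergence_iid_setting} by a factor of $M$ without changing the $O(1/\sqrt{K})$ rate), or supply an argument beyond what the paper gives to justify the stated constant; there is no hidden trick in the paper that you are missing.
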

\begin{proof}
    See \paperappendix{proof:bound_globalw_diff}. 
\end{proof}

We give the proof of Theorem~\ref{theorem:convergence_iid_setting} as follows. \vspace*{1ex}

\begin{proof}
Consider the difference vector $\de[k]$ defined in (\ref{eq:def_nw_delta}), we expand it as 
\begin{subequations}
\begin{align}
    \de[k] & = \nw[k]-\nw[k+1] \\
    & \overset{\cirone}{=} \nw[k] - \frac{1}{M} \sum_{m=1}^{M} \hw[k,\tau]_{m} \\
    & \overset{\cirtwo}{=} \nw[k] - \frac{1}{M} \sum_{m=1}^{M} \nw[k,\tau]_{m} + \frac{1}{M} \sum_{m=1}^{M} \quanterr[r]_{m},
\end{align}
\end{subequations}
where $\cirone$ follows from (\ref{eq:nw_simple_agg}), 
and $\cirtwo$ holds by substituting the definition of quantization error.
From \paperassumption{assumption:Lsmooth}, we have
\begin{subequations}
\begin{align}
    f(\nw[k+1])\!-\!f(\nw[k]) \leqslant \!-\innprod{\nabla f(\nw[k])}{\!\de[k]\!} \!+\! \frac{L}{2} \normsq{\de[k]}.
\end{align}
\end{subequations}
Let $\frac{(L \eta)^{2}\tau( \tau-1)}{2} +  \frac{L\eta \tau}{\lbound^2} \leqslant \frac{1}{\ubound^2}$,  
take the expectation on both sides, and use Lemmas~\ref{lemma:bound_grad_divergence}--\ref{lemma:bound_globalw_diff}:
\begin{subequations}
\begin{align}
    &\; \expect \left[ f(\nw[k+1]) - f(\nw[k])\right] \leqslant 
    - \frac{\lbound^2 \eta \tau}{2} \expect \normsq{\nabla f(\nw[k])} - \nonumber  \\
    &\; \frac{(\lbound\ubound)^2\eta}{4M}\!\left(\frac{2}{\ubound^2} 
        \!-\!L^2\eta^2\tau(\tau-1)\!-\!\frac{2L\eta\tau}{\lbound^2} \right) \!\sumAll\! \expect  \normsq{\grad[k,t]_m}  \nonumber \\
    &\qquad + \frac{(\ubound\eta)^{2} L \tau}{4}\left(\frac{2}{M}+\lbound^{2} L \eta(\tau-1)\right) \sigma_{\varepsilon}^{2} \nonumber \\
    &\qquad + \frac{\eta (\ubound^2 - \lbound^2)}{M} \sum_{m=1}^M R^{(k)}_{m} 
         + \frac{L}{2M}\sigma_{k}^2 \label{eq:lr_choice} \\
    & \; \overset{\cirone}{\leqslant} - \frac{\lbound^2 \eta \tau}{2} \expect \normsq{\nabla f(\nw[k])} 
     + \frac{(\ubound\eta)^{2} L \tau}{4}\Big(\frac{2}{M} + \nonumber\\ & \qquad \lbound^{2} L \eta(\tau-1)\Big) \sigma_{\varepsilon}^{2} 
     \!+\! \frac{\eta (\ubound^2\!-\!\lbound^2)}{M} \sum_{m=1}^M R^{(k)}_{m} 
         \!+\! \frac{L}{2M}\sigma_{k}^2,
\end{align}
\end{subequations}
where $\cirone$ follows from the restrictions on learning rate. 
We rewrite the restriction as 
\begin{subequations}
\begin{align}
    & \eta  \leqslant \frac{-\frac{L\tau}{c_1^2} + \sqrt{\frac{L^2 \tau^2 }{c_1^4}+ \frac{2L^2 \tau (\tau - 1) }{c_2^2}}  }{L^2 \tau (\tau-1)} \\
    & \quad \overset{\cirone}{\leqslant} \frac{-1 + \sqrt{ 1+ \frac{2 c_1^4 }{c_2^2}}  }{L (\tau-1) c_1^2} 
     \overset{\cirtwo}{\leqslant} \frac{c_1^2}{L(\tau-1)c_2^2}, \label{eq:lr_restrict}
\end{align}
\end{subequations}
where in $\cirone$ we use $\tau(\tau - 1) \leqslant \tau^2$, and $\cirtwo$ holds due to the Bernoulli inequality $(1+x)^{\frac{1}{2}} \leqslant 1 + \frac{1}{2}x,\; \forall\; x\in[-1, \infty)$.  
Based on \eqref{eq:lr_restrict}, we set the learning rate $\eta = O\left((\frac{c_1}{c_2})^2 \frac{1}{L\tau \sqrt{K}}\right)$. 
The choice of the learning rate $\eta$ makes the upper bound in \eqref{eq:lr_choice} independent of the client local gradient $\grad[k,t]_m$.
Summing up over $K$ communication rounds yields 
\begin{subequations}
\begin{align}
    & \frac{1}{K} \sum_{k=0}^{K-1} \lbound^2 \expect \normsq{\nabla f(\nw[k])} 
     \leqslant  \frac{2 \left[f(\nw[0]) - f(\nw^{*}) \right] }{\eta\tau K} \nonumber \\
    & + \ubound^2 L \eta \left[ \frac{1}{M} +  \frac{\lbound^2 L\eta (\tau-1)}{2} \right] \sigma_{\varepsilon}^2 
      +  \frac{L}{\eta\tau K M} \sum_{k=0}^{K-1} \sigma_{k}^2  \nonumber \\
    & + \frac{2(\ubound^2 - \lbound^2)}{\tau MK} \sum_{k=0}^{K-1}\sum_{m=1}^{M} R^{(k)}_{m}.  
\end{align}
\end{subequations}
\end{proof}

\subsection{Proof of \paperlemma{lemma:global_nw_aggregation} \label{proof:global_nw_aggregation}}
\begin{proof}
From the reconstruction rule (\ref{eq:latent_weight_reconstruct}), we have 
\begin{equation}\label{eq:global_nw_reconstruct}
    \nw[k+1] = 2 \p[k+1] - 1.
\end{equation}
The $i$th entry of $\p[k]$ is defined in (\ref{eq:vote_distribution}), i.e., 
\begin{equation}\label{eq:ith_entry_pg}
    p^{(k+1)}_{i}= \frac{1}{M} \sum_{m=1}^{M} \ind{\widehat{w}^{(k,\tau)}_{m,i}=1 }. 
\end{equation}
Substituting (\ref{eq:ith_entry_pg}) into (\ref{eq:global_nw_reconstruct}) yields 
\begin{equation}
    w^{(k+1)}_{i}
    = \frac{1}{M} \sum_{m=1}^{M}\left( 2 \cdot \ind{\widehat{w}^{(k,\tau)}_{m,i}=1 } -1 \right).
\end{equation}
Note that 
\begin{equation}
    2 \cdot \ind{\widehat{w}^{(k,\tau)}_{m,i}=1 } - 1 = 
    \left\{\begin{array}{l @{\; \;} l}
    +1, & \widehat{w}_{m}^{(k, \tau)}=+1, \\[3pt]
    -1, & \widehat{w}_{m}^{(k, \tau)}=-1,
\end{array}\right.
\end{equation}
we have 
\begin{equation}
    w^{(k+1)}_{i}= \frac{1}{M} \sum_{m=1}^{M} \widehat{w}^{(k)}_{i},
\end{equation}
which completes the proof. 
\end{proof}

\subsection{Proof of \paperlemma{lemma:bound_w_divergence} \label{proof:bound_w_divergence}}
\begin{proof}
With the local initialization and update method described in \paperalg{alg:fedvote}, we have 
\begin{subequations}
\begin{align}
    &\; \expect{ \normsq{\nw[k,\tau]_{m}\!-\!\nw[k]}}  %
    = \expect{ \normsq{ \varphi(\lw[k,\tau]_{m})\!-\!\varphi(\lw[k,0]_{m}) }   } \\
    \overset{\cirone}{\leqslant} &\;  \ubound^2  \expect{ \normsq{\lw[k,\tau]_{m} - \lw[k,0]_{m} }} 
    = \ubound^2 \expect{ \normSQ{\sumtau  -\eta \, \stograd[k,t]}  }.
\end{align}
\end{subequations}
where $\cirone$ comes from the Lipschitz condition in \paperlemma{lemma:lipschitz_continuity}. 
By decomposing $\stograd[k,t]$ into $\grad[k,t]_m + \eps[k,t]_{m}$, we have 
\begin{subequations}
    \begin{align}
    &\; \expect{ \normsq{\nw[k,\tau]_{m}\!-\!\nw[k]}} \nonumber\\
    = &\; (\ubound\eta)^2  \Bigg( \expect \normSQ{\sumtau \grad[k,t]_m }
    +  \expect \normSQ{\sumtau \eps[k,t]_{m}} \Bigg) \\
    \leqslant &\;  (\ubound\eta)^2 \tau
        \left( \sumtau \expect \normsq{\grad[k,t]_m} 
        + \sigma_{\varepsilon}^{2} \right),
\end{align}
\end{subequations}
which completes the proof. 
\end{proof}

\subsection{Proof of \paperlemma{lemma:bound_grad_divergence} \label{proof:bound_grad_divergence}}
\begin{proof}
We have 
\begin{subequations}
\begin{align}
    &\; \expect{\innprod{\nabla f(\nw[k])}{ \de[k] }} \nonumber \\
    = &\; \expect \innprod{\nabla f(\nw[k])}
    { \nw[k]\!-\! \frac{1}{M}\!\sum_{m=1}^{M} \nw[k,\tau]_{m} \!+\! \frac{1}{M}\!\sum_{m=1}^{M}\!\quanterr[k]_{m}}  \\
    = &\; \expect \innprod{\nabla f(\nw[k])}{\frac{1}{M}\sum_{m=1}^{M} \varphi(\lw[k]) - \varphi(\lw[k,\tau]_{m}) } \label{eq:interm_inner_product}
\end{align}
\end{subequations}
From the mean value theorem, for $h^{(k)}_{i}, h^{(t,\tau)}_{m,i} \in \mathbb{R}$, there exists a point $c^{(k)}_{m,i} \in \interval$ such that 
\begin{equation}
    f^{\prime}(c^{(k)}_{m,i}) = \frac{\varphi(h^{(k)}_i) - \varphi(h^{(k,\tau)}_{m,i})}{h^{(k)}_i - h^{(k,\tau)}_{m,i}},
\end{equation}   
where $\interval$ is an open interval with endpoints $h^{(k)}_i$ and $h^{(k,\tau)}_{m,i}$, i.e.,
\begin{equation}
    \interval = \left\{
    \begin{array}{l @{\; \;} l}
        (h^{(k)}_i, h^{(k,\tau)}_{m,i}), & \textrm{ if } h^{(k)}_i < h^{(k,\tau)}_{m,i}, \\[3pt]
        (h^{(k,\tau)}_{m,i}, h^{(k)}_i), & \textrm{ otherwise. }
    \end{array}
    \right.
\end{equation}
Let  $C_{m}^{(k)} = \diag{\frac{\diff \varphi}{\diff c^{(k,\tau)}_{m,1}}, \cdots, \frac{\diff \varphi}{\diff c^{(k,\tau)}_{m,d}}}$, we have 
\begin{subequations}
\begin{align}
    \varphi(\lw[k]) - \varphi(\lw[k,\tau]_{m}) 
    & = C^{(k)}_m (\lw[k] - \lw[k,\tau]_{m}) \\ 
    & = \eta\, C^{(k)}_m \sum_{t=0}^{\tau-1} \left(\grad[k,t]_m + \eps[k,t]_{m}\right), \label{eq:expand_lw_difference}
\end{align}
\end{subequations}
Substituting (\ref{eq:expand_lw_difference}) into (\ref{eq:interm_inner_product}) yields
\begin{equation}
    \expect{\innprod{\nabla f(\nw[k])}{ \de[k] }}  
    = \frac{\eta}{M} \sum_{m=1}^{M}\sum_{t=0}^{\tau-1} \expect \innerprod{\nabla f(\nw[k])}{C_{m}^{(k)}\,\grad[k,t]}. \label{eq:inner_product_linear_approximation}
\end{equation}
According to the chain rule, $\grad[k,t]_m$ can be written as
\begin{equation}
    \grad[k,t]_m = \nabla_{\lw} f_{m}(\varphi(\lw[k,t]_{m})) = D_{m}^{(k,t)}\, \nabla f_m(\nw[k,t]_{m}), \label{eq:chain_rule}
\end{equation}
where $D_{m}^{(k,t)} = \diag{\frac{\diff \varphi}{\diff h^{(k,t)}_{m,1}}, \cdots, \frac{\diff \varphi}{\diff h^{(k,t)}_{m,d}}}$. 
To simplify the notations, let $B_{m}^{(k,t)} = C_{m}^{(k)} D_{m}^{(k,t)}$. Note that $B_{m}^{(k,t)}$ is still a diagonal matrix, where the $i$th diagonal element $b^{(k,t)}_{m,i}$ is
\begin{equation}
    b^{(k,t)}_{m,i} \triangleq (B_{m}^{(k,t)})_{i,i} = \frac{\diff \varphi}{\diff c^{(k,\tau)}_{m,i}} \cdot \frac{\diff \varphi}{\diff h^{(k,t)}_{m,i} }.
\end{equation}
Substituting (\ref{eq:chain_rule}) into (\ref{eq:inner_product_linear_approximation}) we have 
\begin{align}
    & \expect{\innprod{\nabla f(\nw[k])}{ \de[k] }} \nonumber \\
    = &  \frac{\eta}{M} \sum_{m=1}^{M}\sum_{t=0}^{\tau-1} \expect \innerprod{\nabla f(\nw[k])}{B_{m}^{(k,t)}\, \nabla f_{m}(\nw[k,t]_{m}) }. \label{eq:expected_inner_product}
\end{align}
We first focus on the following inner product:
\begin{align}
    &\; \innerprod{\nabla f(\nw[k])}{B_{m}^{(k,t)}\, \nabla f_{m}(\nw[k,t]_{m}) } \nonumber \\
    = &\; \sum_{i=1}^{d} (\nabla f(\nw[k]))_i \times b^{(k,t)}_{m,i} (\nabla f_{m}(\nw[k,t]_{m}))_i , \label{eq:expand_inner_product}
\end{align}
where $(\nabla f)_{i}$ denotes the $i$th entry of the gradient vector. 
We consider the set $\posindex$ defined as 
\begin{equation}
    \posindex \triangleq \left\{i \in [d] |(\nabla f(\nw[k]))_i \cdot (\nabla f_{m}(\nw[k,t]_{m}))_i \geqslant 0 \right\}, \\
\end{equation}
Let $\negindex$ denote the complement of $\posindex$. 
The result in (\ref{eq:expand_inner_product}) can be bounded as 
\begin{subequations}
\begin{align}
    & \; \expect{\innprod{\nabla f(\nw[k])}{B_{m}^{(k,t)}\, \nabla f_{m}(\nw[k,t]_{m}) }} \\
    \overset{\cirone}{\geqslant} & \; \lbound^2 \sum_{i\in\posindex} \expect \left[ (\nabla f(\nw[k]))_i \cdot (\nabla f_{m}(\nw[k,t]_{m}))_i \right] + \nonumber
    \\ &\;  \ubound^2  \sum_{i\in\negindex} \expect \left[ (\nabla f(\nw[k]))_i \cdot (\nabla f_{m}(\nw[k,t]_{m}))_i \right] \\
    = &\; \lbound^2 \, \expect \innerprod{\nabla f(\nw[k])}{\nabla f_{m}(\nw[k,t]_{m})} + \nonumber \\
    &\; (\ubound^2\!-\!\lbound^2)\hspace*{-4pt} \sum_{i\in\negindex}\hspace*{-4pt} \expect\!\left[(\nabla f(\nw[k]))_i \cdot (\nabla f_{m}(\nw[k,t]_{m}))_i \right],
\end{align}
\end{subequations}
where $\cirone$ follows from \paperassumption{assumption:normalization_function}. 
To simplify the notation, 
let $R^{(k)}_{m} $ denote the accumulated gradient divergence, namely, 
\begin{equation}
    R^{(k)}_{m} \triangleq - \sum_{t=0}^{\tau-1}\hspace*{-4pt} \sum_{i\in\negindex}\hspace*{-4pt} \expect\left[ (\nabla f(\nw[k]))_i  (\nabla f(\nw[k,t]_{m}))_i \right].
\end{equation}
The expected inner product in (\ref{eq:expected_inner_product}) can be bounded as 

\begin{align}
    & \expect{\innprod{\nabla f(\nw[k])}{ \de[k] }} 
    \geqslant \frac{\lbound^2 \eta}{M} \sum_{m=1}^M\sum_{t=0}^{\tau-1} \expect  \big\langle\nabla f(\nw[k]) , \nonumber \\
    &\qquad \nabla f_{m}(\nw[k,t]_{m}) \big\rangle     
    - \frac{\eta(\ubound^2 - \lbound^2)}{M} \sum_{m=1}^M R^{(k)}_{m}. \label{eq:expected_inner_product}
\end{align}
By rewriting the inner product term in \eqref{eq:expected_inner_product} according to $2\inprod{\vecA}{\vecB} = \normsq{\vecA} + \normsq{\vecB} - \normsq{\vecA - \vecB}$, we have 

\begin{subequations}
\begin{align}
    & \; \expect{\innprod{\nabla f(\nw[k])}{ \de[k] }} \nonumber \\
    \geqslant  &\;   \frac{\lbound^2\eta}{2M} \sumAll 
        \Big(\expect \normsq{\nabla f(\nw[k])}
        + \expect \normsq{\nabla f(\nw[k,t]_m)} \nonumber \\  
    &   \!-\! \expect \normsq{\nabla f(\nw[k,t]_m) \!-\! \nabla f(\nw[k]) } \Big) 
        \!-\! \frac{\eta(\ubound^2 \!-\! \lbound^2)}{M} \sum_{m=1}^M R^{(k)}_{m} \\
    \overset{\cirone}{\geqslant} &\; \frac{\lbound^2 \eta \tau}{2} \normsq{\nabla f(\nw[k])} 
    + \frac{\lbound^2 \eta}{2M} \sumAll \Big( \expect \normsq{\nabla f(\nw[k,t]_{m}) } \nonumber \\
    & - L^2  \expect \normsq{\nw[k,t]_{m} - \nw[k]} \Big) 
    -  \frac{\eta(\ubound^2 - \lbound^2)}{M} \sum_{m=1}^M R^{(k)}_{m}, \label{eq:grad_divergence_interm_result}
\end{align}
\end{subequations}
where $\cirone$ holds due to \paperassumption{assumption:Lsmooth} of Lipschitz smoothness for the objective function. 
From Lemma \ref{lemma:bound_w_divergence}, we can show that
\begin{equation}\label{nested_weight_divergence}
    \expect \normsq{\nw[k,t]_{m} - \nw[k]} \leqslant 
    (\ubound\eta)^2 t \left( \sum_{n=0}^{t-1} \normsq{ \grad[k,n] } + \sigma_{\varepsilon}^2 \right),
\end{equation}
where $t=1, \, \cdots,\, \tau-1$. 
Substituting (\ref{nested_weight_divergence}) 
in (\ref{eq:grad_divergence_interm_result}) yields
\begin{subequations}
    \begin{align}
        & \expect{\innprod{\nabla f(\nw[k])}{ \de^{(k)} }} \nonumber \\
        \geqslant &\;  \frac{\lbound^2 \eta \tau}{2} \expect \normsq{\nabla f(\nw[k])} + 
        \frac{\lbound^2 \eta}{2M} \sumAll \expect \normsq{ \nabla f(\nw[k,t]_m) } \nonumber \\ 
            & \!-\! \frac{(\lbound\ubound L)^2 \eta^3}{4M} \sumM \sum_{n=0}^{\tau-1} \left( \tau(\tau-1) \!-\! n(n+1) \right) \expect \normsq{\grad[k,n]}   \nonumber \\
            & - \frac{ (\lbound\ubound L)^2 \eta^3 \tau(\tau-1)}{4} \sigma_{\varepsilon}^2 
            - \frac{\eta (\ubound^2 - \lbound^2)}{M} \sum_{m=1}^M R^{(k)}_{m} \\
        \overset{\cirone}{\geqslant} &\; \frac{\lbound^2 \eta \tau}{2} \expect \normsq{\nabla f(\nw[k])} 
                +   \frac{\lbound^2 \eta}{2M} \sumAll \expect \normsq{ \nabla f(\nw[k,t]_m) } \nonumber \\
            & - \frac{\eta}{4M} (\lbound \ubound L)^2\eta^2\tau(\tau-1) \sumAll \expect  \normsq{\grad[k,t]_m}  \nonumber \\
            &  -\frac{ (\lbound \ubound L)^2\eta^3 \tau(\tau-1)}{4} \sigma_{\varepsilon}^2 
            - \frac{\eta (\ubound^2 - \lbound^2)}{M} \sum_{m=1}^M R^{(k)}_{m} \\
        \overset{\cirtwo}{\geqslant} &\; \frac{\lbound^2 \eta \tau}{2} \expect \normsq{\nabla f(\nw[k])}  \nonumber \\
            & + \frac{\lbound^2\eta}{4M} \left( 2 - (\ubound L)^2\eta^2\tau(\tau-1) \right)\sumAll \expect  \normsq{\grad[k,t]_m}  \nonumber \\
            & -\frac{(\lbound \ubound L)^2\eta^3 \tau(\tau-1)}{4} \sigma_{\varepsilon}^2 
            \!-\! \frac{\eta (\ubound^2 - \lbound^2)}{M} \sum_{m=1}^M R^{(k)}_{m}      
    \end{align}
\end{subequations}
where $\cirone$ follows from $\tau(\tau-1) - n(n+1) \leqslant \tau(\tau-1)$; $\cirtwo$ holds due to the chain rule in (\ref{eq:chain_rule}) and \paperassumption{assumption:normalization_function}.

\end{proof}

\subsection{Proof of \paperlemma{lemma:bound_globalw_diff}}\label{proof:bound_globalw_diff}
\begin{proof}
We have 
\begin{subequations}
\begin{align}
    & \expect\!\normsq{\de[k]}
    \!=\!\expect\!\normSQ{\nw[k]\!-\!\frac{1}{M}\!\sum_{m=1}^{M}\!\nw[k,\tau]_{m}\!+\!\frac{1}{M}\!\sum_{m=1}^{M}\!\quanterr[k]_{m}}\\
    & = \expect\!\normSQ{\frac{1}{M}\sum_{m=1}^{M} \varphi(\lw[k])\!-\!\varphi(\lw[k,\tau]_{m}) + \frac{1}{M} \sum_{m=1}^{M} \quanterr[k]_{m}} \\
    & = \expect \normSQ{\frac{1}{M}\sum_{m=1}^{M} \varphi(\lw[k]) - \varphi(\lw[k,\tau]_{m})}
      \!+\!\expect \normSQ{\frac{1}{M}\sumM \quanterr[k]_{m} }  \nonumber \\
        &  + 2\expect \innerprod{\frac{1}{M}\sum_{m=1}^{M} \varphi(\lw[k]) - \varphi(\lw[k,\tau]_{m})}
        { \frac{1}{M}\sumM \quanterr[t]_{m} }  \\
    \overset{\cirone}{=} &\; \expect\!\normSQ{\frac{1}{M}\sum_{m=1}^{M} \varphi(\lw[k])\!-\!\varphi(\lw[k,\tau]_{m})}
        \!+\! \expect \normSQ{\frac{1}{M}\sumM \quanterr[k]_{m} } \label{eq:interm_bound}.  
\end{align}
\end{subequations}
where $\cirone$ comes from \paperassumption{assumption:quant}. 
For the first term in \eqref{eq:interm_bound},
\begin{subequations}
\begin{align}
    &\; \expect \normSQ{\frac{1}{M} \sum_{m=1}^M (\varphi(\lw[k]) - \varphi(\lw[k,\tau]_{m}) } \nonumber \\
    \leqslant  &\; \frac{1}{M} \sum_{m=1}^M \expect \normSQ{\varphi(\lw[k]) - \varphi(\lw[k,\tau]_{m}) } \\
    \overset{\cirone}{\leqslant}  &\; \frac{\ubound^2}{M} \sum_{m=1}^M \normsq{\lw[k] - \lw[k,\tau]_{m}} \\
    \overset{\cirtwo}{=} &\; \frac{(c_2\eta)^2}{M} \sum_{m=1}^M \normSQ{-\eta \sum_{t=0}^{\tau-1} \stograd[k,t] } \\
    \overset{\cirthree}{\leqslant} &\; \frac{(c_2\eta)^2\tau}{M} \sum_{m=1}^M \sum_{t=0}^{\tau-1} \expect \normSQ{\grad[k,t]_m} +  \frac{(c_2\eta)^2 \tau}{M} \sigma_{\varepsilon}^2, \label{eq:term1}
\end{align}
\end{subequations}
where $\cirone$ is due to \paperlemma{lemma:lipschitz_continuity}
$\cirthree$ follows from \paperassumption{assumption:sto_grad}. 
For the second term in \eqref{eq:interm_bound} we have
\begin{equation}\label{eq:term2}
    \expect \normSQ{\frac{1}{M}\sumM \quanterr[k]_{m} } \leqslant  \frac{1}{M} \sigma_{\zeta}^2. 
\end{equation}
Combing the results of \eqref{eq:term1} and \eqref{eq:term2} completes the proof. 

\end{proof}

\bibliographystyle{IEEEtran}
\bibliography{ref}

\end{document}